\documentclass[10pt,twocolumn,letterpaper]{article}

\usepackage{wacv}              

\newcommand{\doop}{\operatorname{do}}

\usepackage{amsthm}
\theoremstyle{plain}
\newtheorem{theorem}{Theorem}
\newtheorem{proposition}[theorem]{Proposition}

\newtheorem{corollary}[theorem]{Corollary}
\theoremstyle{definition}
\newtheorem{definition}{Definition}

\theoremstyle{remark}

\newcommand{\defeq}{\mathrel{\mathop:}=}

\usepackage{tikz}
\usetikzlibrary{arrows.meta, positioning}

\newcommand{\dashleftrightarrow}{%
  \mathrel{\tikz[baseline]{
    \draw[dashed, dash phase=-0pt, {Computer Modern Rightarrow}-{Computer Modern Rightarrow}] 
      (0, 0.58ex) -- (1.55em, 0.58ex);
  }}%
}

\definecolor{wacvblue}{rgb}{0.21,0.49,0.74}
\usepackage[pagebackref,breaklinks,colorlinks,allcolors=wacvblue]{hyperref}

\def\wacvPaperID{2727} 
\def\confName{WACV}
\def\confYear{2027}

\title{Efficient Fairness Auditing Across Guidance Scales in Text-to-Image Diffusion Models via Causal Abstraction}

\author{
Nabila Tasfiha Rahman\\
University of Arkansas\\
Fayetteville, AR, USA\\
{\tt\small nr072@uark.edu}
\and
Rajatsubhra Chakraborty\\
University of North Carolina at Charlotte\\
Charlotte, NC, USA\\
{\tt\small rchakra6@charlotte.edu}
\and
Depeng Xu\\
University of North Carolina at Charlotte\\
Charlotte, NC, USA\\
{\tt\small dxu7@charlotte.edu}
\and
Lu Zhang\\
University of Arkansas\\
Fayetteville, AR, USA\\
{\tt\small lz006@uark.edu}
}

\begin{document}
\maketitle
\begin{abstract}
Fairness auditing of text-to-image diffusion models often requires generating large numbers of images across sampling configurations, making comprehensive evaluation computationally expensive. We propose a causal-abstraction-based audit instrument for efficiently evaluating fairness under interventions on the classifier-free guidance scale. Given a fixed prompt and a target feature function, we represent the diffusion process as a low-level structural causal model and construct a corresponding high-level model over abstract denoising states. We characterize the projected causal structure, establish identifiability of the fairness-relevant interventional query, and provide sufficient conditions under which the high-level model preserves this query. A probabilistic transformer implements the high-level model as an amortized predictor of target-feature distributions across guidance scales. Experiments evaluate distributional fidelity, fairness-query accuracy, and computational efficiency. We present two auditing demonstrations: one using standard Stable Diffusion 1.5 and another using StayFair, a fairness-enhanced Stable Diffusion model, to examine their behavior across guidance scales.
\end{abstract}
    
\section{Introduction}
\label{sec:intro}

Causal abstraction \cite{rubenstein2017causal,beckers2019abstracting,geiger2025causal} provides a principled way to relate a complex low-level system, such as a high-dimensional vision model, to a simpler, semantically meaningful high-level causal model. Constructing a valid abstraction, however, is fundamentally more difficult than learning a compact representation or an accurate predictive surrogate. Semantic compression is typically many-to-one: distinct low-level states that share the same abstract representation may respond differently to an intervention and produce different downstream outcomes. The discarded information can induce additional causal
dependencies, longer-range temporal effects, and latent confounding among the retained variables \cite{xia2025causal}. Consequently, a high-level model may
reproduce observed data yet fail to preserve the causal mechanisms relevant to an intervention. 
Existing research on causal abstraction develops formal relationships between low- and high-level SCMs and methods for analyzing neural representations and learning abstract causal models \cite{geiger2024finding,geiger2025causal,geiger2021causal,xia2024neural}. Projected abstractions extend this framework to lossy mappings, providing conditions under which a specific causal query can be preserved without reproducing every low-level causal mechanism \cite{xia2025causal}.

Algorithmic auditing, which systematically evaluates the behavior of an AI system against specified criteria such as fairness, safety, or accountability \cite{raji2020closing,lam2024framework}, is particularly well suited to this query-specific perspective. An audit asks how an outcome changes under designated interventions and whether the result satisfies a decision criterion. 
 
A causally consistent abstraction, which preserves not only observational similarity but also fidelity to the relevant interventional distribution and resulting audit decision, can therefore serve as an auditing instrument, replacing repeated executions of the original model with inexpensive high-level simulations.

In this paper, we apply the causal abstraction framework to fairness auditing of text-to-image diffusion models. Prior studies have shown that, although these models produce high-quality visual content, their outputs can reproduce demographic stereotypes for ostensibly neutral occupational prompts \cite{luccioni2023stable,cho2023dall}. Moreover, the
classifier-free guidance (CFG) scale \cite{ho2022classifier} can
change the demographic composition of generated images, making
fairness sensitive to inference-time settings \cite{kim2026stayfair}. These findings
motivate systematic fairness audits of both standard diffusion models and fairness-enhanced guidance methods, such as StayFair \cite{kim2026stayfair}, to assess demographic disparities and the effectiveness of mitigation across guidance scales. However, direct auditing is computationally expensive because each combination of prompt, guidance setting, and random seed requires a complete diffusion rollout.

We address this computational challenge using causal abstraction to reduce the cost of repeated computation. For each fixed prompt, our auditing instrument constructs a high-level causal model that treats the CFG scale as an intervention variable and a final demographic attribute as the audit outcome. Specifically, we map intermediate diffusion states to low-dimensional semantic attributes and train a stochastic transformer neural causal model to capture their evolution. The learned model estimates post-interventional attribute distributions and fairness gaps without rerunning the full diffusion process for every audit sample. To the best of our knowledge, this is the first work to operationalize causal abstraction as an efficient computational instrument for fairness auditing of machine learning models.

A key technical challenge is that the abstraction is necessarily lossy. Consequently, discarded low-level information can induce latent confounding among the abstract variables that is not explicitly represented by the learned transformer. As a result, the causal graph induced by the learned high-level model is not generally equivalent to the partially projected graph associated with the underlying diffusion process, raising the question of whether the model preserves the causal fidelity required for auditing. We therefore establish a correctness argument tailored to our auditing instrument: when the guidance-intervention query is identifiable from the high-level observational distribution, accurately learning that distribution suffices to approximate the audit query despite the graph mismatch. 

We evaluate the proposed instrument in terms of distribution
fidelity, audit-decision fidelity, and computational savings. Distribution fidelity is assessed using Wasserstein distance and a bootstrap reference, while decision fidelity compares audit conclusions obtained from the high-level instrument and the original diffusion model. We
present two auditing demonstrations using standard Stable Diffusion
v1.5 and SD1.5 equipped with StayFair, examining demographic
disparities and their sensitivity to guidance interventions, following the indirect-verification paradigm of \cite{lam2024framework}.

Our contributions are summarized as follows:
\begin{itemize}
    \item We introduce a query-specific causal-abstraction methodology
    for constructing efficient auditing instruments for vision
    generative models and instantiate it for fairness auditing of
    text-to-image diffusion.

    \item We establish an instrument-level correctness guarantee that
    recovers an identifiable audit query from an accurately learned
    observational distribution despite lossy abstraction, AIC
    violations, and nonequivalence between the learned and projected
    causal graphs.

    \item We demonstrate the instrument on standard SD1.5 and
    SD1.5+StayFair, auditing how demographic representation and
    fairness decisions vary across classifier-free guidance scales.
\end{itemize}
\section{Background and Related Work}
Throughout this paper, we use uppercase letters to denote random variables and lowercase letters to denote their realizations.
We write $P(a\mid b)$ as shorthand for $P(A=a\mid B=b)$ when there is no ambiguity.

\subsection{Text-to-Image Diffusion Models}

Text-to-image diffusion models generate images conditioned on a text
prompt $c$ through an iterative denoising process. During training, a
clean sample $x_0$ is progressively corrupted according to
\begin{equation}
x_t =
\sqrt{\bar{\alpha}_t}\,x_0
+
\sqrt{1-\bar{\alpha}_t}\,\epsilon,
\qquad
\epsilon \sim \mathcal{N}(0,I),
\end{equation}
where $\bar{\alpha}_t$ is determined by a predefined noise schedule.
A neural network $\epsilon_{\theta}(x_t,t,c)$ learns to predict the
added noise conditioned on the prompt. During inference, the model
starts from $x_T \sim \mathcal{N}(0,I)$ and iteratively denoises
$x_T \rightarrow x_{T-1} \rightarrow \cdots \rightarrow x_0$ to
produce the final sample.

\paragraph{Classifier-free guidance.}

Classifier-free guidance (CFG) \cite{ho2022classifier} avoids the
auxiliary classifier used by earlier conditional samplers
\cite{dhariwal2021diffusion} by combining conditional and unconditional
noise predictions:
\begin{equation}
\tilde{\epsilon}_{\theta}(x_t,t,c)
=
\epsilon_{\theta}(x_t,t,\varnothing)
+
w\big(
\epsilon_{\theta}(x_t,t,c)
-
\epsilon_{\theta}(x_t,t,\varnothing)
\big),
\label{eq:cfg}
\end{equation}
where $\varnothing$ denotes the null prompt and $w \geq 0$ is the
guidance scale. Under an idealized score interpretation, the guided
prediction corresponds locally to a distribution proportional to
$p_{\theta}(x_t)p_{\theta}(c\mid x_t)^w$. Increasing $w$ therefore
strengthens prompt conditioning but can reduce diversity and alter
demographic representation. Because $w$ is selected at inference and
affects every denoising step, we treat it as an intervention variable
in our auditing framework.

\subsection{Fairness in Diffusion Models}

Text-to-image diffusion models inherit the demographic skew of their web-scraped training corpora \cite{schuhmann2022laion} and typically exaggerate it. For
example, occupational prompts yield outputs whose gender and ethnicity distributions exceed real-world proportions \cite{luccioni2023stable,bianchi2023easily}, association tests reveal stereotype and valence effects that distribution counting alone misses \cite{wang2023t2iat}, and the skew in generated images is generally larger than the skew in the training data \cite{seshadri2024bias}.
Existing audits commonly infer demographic attributes using external
classifiers or CLIP-based scorers \cite{radford2021learning} and
compare their distribution against a specified reference. Following
this approach, we audit prompt-specific demographic representation
relative to a prespecified fairness reference, aggregating over
multiple random seeds and accounting for estimation uncertainty.

Demographic disparities can emerge during denoising rather than only
in the final image. Prior work has identified gender-dependent
generation trajectories \cite{wu2024exposed} and interactions between
demographic and semantic concepts across diffusion timesteps
\cite{chakraborty2025biasmap}. Existing mitigation methods therefore
intervene through prompt or guidance modification
\cite{friedrich2023fair,zhang2023itigen}, cross-attention editing
\cite{orgad2023editing,gandikota2024unified}, or model adaptation
\cite{shen2024finetuning,parihar2024balancing}. These intermediate
dynamics motivate our abstraction of diffusion trajectories into
semantically meaningful demographic attributes.

A further challenge is that demographic representation can change
with the classifier-free guidance scale $w$
\cite{friedrich2023fair,seshadri2024bias,kim2026stayfair}. Kim et
al. \cite{kim2026stayfair} distinguish guidance-induced bias from
underlying model bias and show that mitigation achieved at one scale
may not persist at others. Consequently, both standard diffusion
models and fairness-enhanced methods such as StayFair require
evaluation across guidance settings. Direct evaluation, however,
requires a complete diffusion rollout for each combination of
prompt, guidance scale, and random seed. Our auditing instrument
addresses this cost by learning a high-level causal abstraction from
diffusion trajectories and using it to estimate demographic
distributions and fairness decisions under guidance interventions.

\subsection{Principles of Causal Abstraction}
\label{sec:causal-abstraction-principles}

Causal abstraction studies formal relationships between causal models
at different levels of granularity and the conditions under which a
simplified high-level model preserves the interventional behavior of
a detailed low-level model
\cite{rubenstein2017causal,beckers2019abstracting,geiger2021causal,geiger2025causal,xia2024neural,xia2025causal}.

Symbolically, let \(\mathcal{M}_L\) and \(\mathcal{M}_H\) denote the low- and high-level SCMs,
respectively, and let \(\mathcal{X}_L\) and \(\mathcal{X}_H\) denote
their state spaces. An abstraction mapping
\(
    \tau:\mathcal{X}_L\rightarrow\mathcal{X}_H
\)
maps low-level states to their high-level
representations. The mapping is generally many-to-one, allowing the
high-level SCM to discard low-level information that is not relevant
to the target analysis.

Let \(Q\) denote a causal query on \(\mathcal{M}_L\), such as an
interventional distribution, and let \(\tau(Q)\) denote its
corresponding high-level query induced by the mapping. The objective of causal abstraction is
to ensure that evaluating \(Q\) in the low-level model and expressing
its answer through \(\tau\) agrees with evaluating \(\tau(Q)\) directly
in the high-level model, i.e.,
\(
    \tau\!\left(\mathcal{M}_L(Q)\right)
    =
    \mathcal{M}_H\!\left(\tau(Q)\right).
\)
This requirement may be imposed
only for a specified collection of queries, rather than for all
possible interventions. Moreover, because \(\tau\) can be lossy, the
high-level mechanisms may be probabilistic even when the corresponding low-level mechanisms are deterministic. When exact consistency is not attainable, fidelity can instead be assessed using a distributional distance,
such as the Wasserstein distance.

Causal abstraction is well suited to auditing because an
audit is typically defined by a restricted collection of causal
queries. For example, an audit may examine how the distribution of a
target feature changes under interventions on a model configuration. If the high-level model preserves these queries, it can be used in place of the low-level model to estimate the post-interventional distributions required by the audit. The abstraction thereby provides an amortized audit instrument: its construction cost is incurred once, after which it can efficiently answer multiple audit queries. 
\section{Fairness Auditing of Text-to-Image Diffusion Models}
\label{sec:fairness-audit-problem}

Following the technical logic of indirect verification
\cite{lam2024framework}, we distinguish the auditee and auditor from the audited
system. The audited system is the low-level text-to-image diffusion model
$\mathcal{M}_L$, whereas the auditee is the model provider. Instead of requiring the auditor to repeatedly execute the full diffusion model, the auditee performs a prespecified
evidence-generation procedure and submits the resulting technical
artifacts. For a fixed prompt $c$, the supporting evidence $\mathcal{E}^{c}$ includes diffusion trajectories and
their associated prompts, CFG scales, random seeds, sampler
configurations, and model provenance.
The auditor verifies that the submitted evidence conforms to the
prespecified model, prompt, guidance, sampling, and provenance
requirements. The auditor then independently applies the abstraction
maps, trains a high-level causal model on a designated training split,
validates its causal-abstraction fidelity using held-out
evidence, and applies the resulting instrument to the fairness
criterion. Our scope is limited to the technical evaluation procedure; organizational
governance, auditor independence, public reporting, and certification
are outside the scope of this work.

\paragraph{Audited system and intervention.}
For each audit,
we fix a prompt \(c\) according to a prespecified prompt-generation
protocol. For example, a gender-neutral prompt may be
\(
    c=\text{``A portrait photo of a lawyer.''}
\)

Let \(X_T\sim P_X\) denote the initial noise, and let \(U\sim P_U\)
collect any additional randomness introduced by the sampler during
image generation. The CFG scale is an intervenable variable with audit
range \(\mathcal{W}\). We write \(\doop(w)\) as
shorthand for \(\doop(W=w)\). For any \(w\in\mathcal{W}\), the image
generated under \(\doop(w)\) is denoted by
\[
    X_0^w
    =
    \mathcal{M}_L(c,X_T,U;\doop(w)).
\]

An audit considers one or more target features through a functional mapping \(\tau_0:\mathcal{X}\to [0,1]^d\) and obtains \(A_0=\tau_0(X_0)\). For simplicity, and to
facilitate comparison with previous fairness studies of text-to-image
models, we consider a single target feature and set \(d=1\). For
example, \(a_0=\tau_0(x_0)\) may be the probability assigned by an
attribute classifier to perceived masculine presentation in image
\(x_0\).
For prompt \(c\) and CFG scale \(w\), the post-interventional
target-feature distribution is characterized by
\(
    P^{\mathcal{M}_L}
    \left(
        A_0\leq a_0
        \mid c,\doop(w)
    \right),
\)
and the expected target-feature
value is
\[
    \rho(w,c)
    \defeq
    \mathbb{E}^{\mathcal{M}_L}
    \left[
        A_0
        \mid c,\doop(w)
    \right].
\]

\paragraph{Fairness metric.}
Let \(\rho^\star(c)\in[0,1]\) denote the prespecified fairness reference
for prompt \(c\). Under demographic representation parity, for example,
one may set \(\rho^\star(c)=1/2\), and the prompt-specific fairness gap is
\(
    \phi(w,c)
    =
    \left|
        \rho(w,c)-\rho^\star(c)
    \right|.
\)
Given a prespecified set of guidance scales $\mathcal{W}_K=\{w_1,\ldots,w_K\}\subseteq\mathcal{W}$, the formal
decision target of the audit is therefore
\begin{equation}
    \Phi_{K}(c)
    \defeq
    \max_{w_j\in\mathcal{W}_K}
    \phi(w_j,c).
    \label{eq:low-level-audit-target}
\end{equation}

\begin{definition}[Feature-based fairness audit]
\label{def:feature-based-audit}

Given an audited diffusion model $\mathcal{M}_L$, auditee-provided
evidence $\mathcal{E}^{c}$, a target feature function
$\tau_0:\mathcal{X}\rightarrow[0,1]$, audited guidance settings
$\mathcal{W}_K$, fairness reference $\rho^\star(c)$, and tolerance
$\delta\geq0$, the feature-based fairness audit is the following procedure:
\begin{enumerate}
    \item construct a high-level causal model from a designated training
    split of $\mathcal{E}^{c}$;
    \item validate its fidelity using held-out evidence;
    and
    \item use the validated high-level model to estimate
    $\Phi_{K}(c)$ and determine whether the fairness criterion is
    supported, violated, or inconclusive.
\end{enumerate}
\end{definition}

\paragraph{Monte Carlo evaluation.}
We use finite Monte Carlo samples to estimate the target-feature distribution at each predefined CFG scale. Given prompt \(c\), we draw
\(N\) independent pairs
\((X_{T,i},U_i)\sim P_XP_U\) and reuse the same pairs across CFG
scales. Let
\(
    A_{0,i}^{w_j}
    =
    \tau_0\!\left(
        \mathcal{M}_L(
            c,X_{T,i},U_i;\doop(w_j)
        )
    \right)
\)
denote the target feature obtained from the \(i\)-th sample at CFG
scale \(w_j\). We estimate the expected target feature and fairness
gap as
\(
    \hat{\rho}(w_j,c)
    =
    \frac{1}{N}
    \sum_{i=1}^N A_{0,i}^{w_j},
\)
and
\(
    \hat{\phi}(w_j,c)
    =
    \left|
        \hat{\rho}(w_j,c)-\rho^\star(c)
    \right|.
\)
Define the estimator of the worst-case fairness gap as
\(
    \hat{\Phi}_K
    \defeq
    \max_{w_j\in\mathcal{W}_K}\hat{\phi}(w_j,c).
\)
Because \(A_{0,i}^{w_j}\in[0,1]\), Hoeffding's inequality and a union
bound over the \(K\) CFG scales imply that, with probability at least
\(1-\alpha\),
\[
    \Phi_K
    \in
    \left[
        \max\{0,\hat{\Phi}_K-\epsilon_N\},
        \hat{\Phi}_K+\epsilon_N
    \right],
\]
where
\(
    \epsilon_N
    =
    \sqrt{
        \frac{\log(2K/\alpha)}{2N}
    }.
\)
This interval accounts for Monte Carlo uncertainty and covers the
worst-case fairness gap over the predefined scale set
\(\mathcal{W}_K\).

\paragraph{Audit conclusion.}
Let \(L_{\mathrm{audit}}\) and \(U_{\mathrm{audit}}\) denote the lower
and upper endpoints of the interval above. For a prespecified tolerance
\(\delta\), the fairness criterion is supported within the audit scope
if \(U_{\mathrm{audit}}\leq\delta\). A fairness violation is detected
if \(L_{\mathrm{audit}}>\delta\); otherwise, the audit is inconclusive.

\section{Causal-Abstraction Audit Instrument}
In this section, we construct a high-level causal model from the submitted trajectory evidence and use it as an audit instrument to evaluate the prespecified fairness query without repeatedly executing the low-level diffusion model. A valid audit instrument must preserve the post-interventional target-feature distribution relevant to the audit and enable this distribution to be identified and efficiently estimated despite information loss introduced by the abstraction. We first formalize the required query-specific consistency between the low- and high-level SCMs. We then construct a high-level causal structure under which the audit query is identifiable. Finally, we instantiate the audit instrument as a transformer neural causal model trained on abstracted diffusion trajectories for amortized inference.

\subsection{Query-Specific Causal Abstraction}
Let the text-to-image diffusion
model be defined as a low-level SCM \(\mathcal{M}_L\) with endogenous variables
\(\mathcal{V}_L\), including the initial noise \(X_T\), the generated
image \(X_0\), the CFG scale \(W\), and a fixed prompt \(c\). 
We consider a
high-level SCM \(\mathcal{M}_H\) with endogenous variables
\(\mathcal{V}_H\), together with an abstraction mapping
\(
    \tau:
    \operatorname{Dom}(\mathcal{V}_L)
    \to
    \operatorname{Dom}(\mathcal{V}_H).
\)
The components of \(\tau\) required by the audit query include the
prespecified target feature function \(\tau_0\), which maps \(X_0\) to
\(A_0=\tau_0(X_0)\). In addition, we also consider a mapping $\tau_T$ that maps \(X_T\) to \(A_T=\tau_T(X_T)\), as well as an identity mapping \(\tau_W(w)=w\) that retains the CFG scale without coarsening. Accordingly, \(\mathcal{V}_H\) contains at least \(W\), \(A_T\), and \(A_0\), but
may contain additional high-level variables. The query-specific framework does not impose a particular choice of these additional variables or their
component mappings. 

Given the high-level SCM $\mathcal{M}_H$ defined above, we present the condition under which $\mathcal{M}_H$ can recover the audit query. For a fixed prompt \(c\) and CFG scale \(w\), define the conditional causal query \(Q_{w,c}\) on the low-level model as
\begin{equation}
\begin{split}
& Q_{w,c}(a_0\mid x_T)
\defeq 
P^{\mathcal{M}_L}
\left(
    A_0\leq a_0
    \mid x_T,c,\doop(w)
\right)\\
& = 
P_{U\sim P_U}
\left(
    \tau_0\!\left(
        \mathcal{M}_L(c,x_T,U;\doop(w))
    \right)
    \leq a_0
\right).
\end{split}
\label{eq:low-level-query}
\end{equation}
That is, \(Q_{w,c}(a_0\mid x_T)\) is the probability that the target
feature of the generated image is at most \(a_0\), conditional on the
prompt \(c\), initial noise \(x_T\), and CFG intervention \(\doop(w)\).
Correspondingly, define the mapping of \(Q_{w,c}\) onto the high-level
SCM \(\mathcal{M}_H\) as
\begin{equation}
\begin{split}
& \tau(Q_{w,c})(a_0\mid x_T)
\defeq 
P^{\mathcal{M}_H}
\left(
    A_0\leq a_0
    \mid \tau_T(x_T),c,\doop(w)
\right)\\
& =
P_{U\sim P_U}
\left(
    \mathcal{M}_H \left(
        c,\tau_T(x_T),U;\doop(w)
    \right)
    \leq a_0
\right).
\end{split}
\label{eq:high-level-query}
\end{equation}
We say that the high-level SCM is \(Q_{w,c}\)-\(\tau\) consistent
with the low-level SCM if the following condition holds.

\begin{definition}[\(Q_{w,c}\)-\(\tau\) consistency]
\label{def:q-tau-consistency}
The high-level SCM \(\mathcal{M}_H\) is
\(Q_{w,c}\)-\(\tau\) consistent with the low-level SCM
\(\mathcal{M}_L\) if for every
\(a_0\in\mathcal{A}_0\), \(a_T\in\mathcal{A}_T\),
\(w\in\mathcal{W}\), and any \(x_T\) satisfying
\(\tau_T(x_T)=a_T\), we have
\begin{equation}
\begin{split}
&\mathbb{E}_{X_T\mid\tau_T(X_T)=a_T}
\left[
    Q_{w,c}(a_0\mid X_T)
\right] =
\tau(Q_{w,c})(a_0\mid x_T).
\end{split}
\label{eq:q-tau-consistency}
\end{equation}
\end{definition}

\begin{proposition}
\label{prop:audit-query-recovery}
If the high-level SCM \(\mathcal{M}_H\) is \(Q_{w,c}\)-\(\tau\) consistent with
\(\mathcal{M}_L\), then it recovers the post-interventional target-feature distribution of the low-level SCM:
\begin{equation}
\begin{split}
&P^{\mathcal{M}_L}
\left(
    \tau_0(X_0)\leq a_0
    \mid c,\doop(w)
\right) = \\
& \qquad P^{\mathcal{M}_H}
\left(
    A_0\leq a_0
    \mid c,\doop(w)
\right)
\end{split}
\label{eq:audit-distribution-recovery}
\end{equation}
for every \(a_0\in\mathcal{A}_0\) and \(w\in\mathcal{W}\).
Consequently, \(\mathcal{M}_H\) also recovers the audit query
\(\rho(w,c)\).
\end{proposition}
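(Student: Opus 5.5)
The plan is to reduce the unconditional post-interventional distribution to an average of the conditional query $Q_{w,c}$ via the law of total probability. We then regroup the average according to the level sets of $\tau_T$ and apply Definition~\ref{def:q-tau-consistency} on each level set.

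First, on the low-level side, $X_T\sim P_X$ and $U\sim P_U$ are independent. The intervention $\doop(w)$ does not alter the distribution of the exogenous noise, since $W$ is set externally and $X_T$ is not a descendant of $W$. Conditioning on $X_T$ and integrating out $U$ therefore gives
\[
P^{\mathcal{M}_L}\left(\tau_0(X_0)\leq a_0\mid c,\doop(w)\right)=\mathbb{E}_{X_T\sim P_X}\left[Q_{w,c}(a_0\mid X_T)\right],
\]
using the definition in \eqref{eq:low-level-query}.

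Next, we apply the tower property with respect to the $\sigma$-algebra generated by $A_T=\tau_T(X_T)$. This rewrites the right-hand side as $\mathbb{E}_{A_T}\big[\mathbb{E}_{X_T\mid\tau_T(X_T)=A_T}[Q_{w,c}(a_0\mid X_T)]\big]$, where $A_T$ carries the pushforward law $P_X\circ\tau_T^{-1}$. By \eqref{eq:q-tau-consistency}, the inner conditional expectation at $A_T=a_T$ equals $\tau(Q_{w,c})(a_0\mid x_T)$ for any $x_T$ with $\tau_T(x_T)=a_T$. By \eqref{eq:high-level-query}, that quantity depends only on $a_T$ and equals $P^{\mathcal{M}_H}(A_0\leq a_0\mid A_T=a_T,c,\doop(w))$. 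Provided the high-level SCM's exogenous law for $A_T$ is the pushforward of $P_X$ under $\tau_T$, which is the natural reading of $A_T=\tau_T(X_T)$, and $A_T$ is likewise unaffected by $\doop(w)$, the outer average becomes
\[
\mathbb{E}_{A_T}\left[P^{\mathcal{M}_H}(A_0\leq a_0\mid A_T,c,\doop(w))\right]=P^{\mathcal{M}_H}(A_0\leq a_0\mid c,\doop(w)),
\]
which is \eqref{eq:audit-distribution-recovery}.

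For the final claim, $A_0\in[0,1]$, so its expectation is determined by its CDF: $\mathbb{E}[A_0]=\int_0^1\big(1-P(A_0\leq a)\big)\,da$. Equal CDFs under $\doop(w)$ in both models therefore give equal means, and hence $\rho(w,c)$ is recovered.

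The main obstacle I expect is the measure-theoretic step of conditioning on the level sets $\{\tau_T=a_T\}$, which typically have probability zero. I would handle it by interpreting $\mathbb{E}_{X_T\mid\tau_T(X_T)=a_T}$ as a regular conditional distribution, so that the identity holds for $P_{A_T}$-almost every $a_T$; this suffices inside the outer integral. The other point to make explicit is that the distribution of $A_T$ in $\mathcal{M}_H$ matches the pushforward of $P_X$. I would state this as part of what it means for $A_T$ to be the abstraction of $X_T$ under $\tau$.
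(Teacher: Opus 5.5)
Your proposal is correct and follows the same route as the paper's proof. You decompose over $A_T=\tau_T(X_T)$ by the law of total probability, replace the inner fiber-wise expectation of $Q_{w,c}$ using $Q_{w,c}$-$\tau$ consistency, average back over $A_T$, and then pass from equal CDFs to equal means. You are also more explicit than the paper about a step its proof leaves unstated: the outer average equals $P^{\mathcal{M}_H}(A_0\leq a_0\mid c,\doop(w))$ only if the law of $A_T$ in $\mathcal{M}_H$ is the pushforward $P_X\circ\tau_T^{-1}$ and is unaffected by $\doop(w)$.
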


\subsection{High-Level Causal Structure}
\label{sec:high-level-structure}

The high-level SCM $\mathcal{M}_H$ and the corresponding
abstraction mapping $\tau$ are designed to satisfy two requirements:
(i) $\mathcal{M}_H$ is $Q_{w,c}$-$\tau$ consistent with the low-level
SCM $\mathcal{M}_L$; and (ii) the high-level query $\tau(Q_{w,c})$ is identifiable from the observational high-level distribution. Condition (ii) ensures that the high-level query is uniquely determined by the observational high-level distribution.
In the following, we present one construction satisfying these requirements, while noting that it is not the only possible
construction.

\paragraph{Variable-wise abstraction.}
For the fixed prompt $c$, we define the endogenous variables of the
low-level SCM as
\(
    \mathcal{V}_L
    =
    \{W,X_T,X_{T-1},\ldots,X_0\},
\)
where $X_t$ for each $t\in\{1,\ldots,T-1\}$ is the diffusion state at step \(t\). The sampling randomness $U$ is treated as exogenous and not included in $\mathcal{V}_L$. 
For each $t$, let
\(
    \tau_t:\mathcal{X} \to \mathcal{A}_t \subseteq[0,1]
\)
be a feature mapping, and define the corresponding high-level
variable as $A_t=\tau_t(X_t)$. 
Together, these component mappings define the abstraction mapping
$\tau$ from the low-level variable domain to the high-level variable
domain. The endogenous variables of the high-level SCM are
\(
    \mathcal{V}_H
    =
    \{W,A_T,A_{T-1},\ldots,A_0\}.
\)

\begin{proposition}
\label{prop:constructive-abstraction}
The mapping $\tau$ defined above is a constructive abstraction
function \cite{xia2025causal} from $\mathcal{V}_L$ to
$\mathcal{V}_H$.
\end{proposition}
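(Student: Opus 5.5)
We will verify directly the defining conditions of a constructive abstraction function in the sense of \cite{xia2025causal}. In that framework, a map $\tau:\operatorname{Dom}(\mathcal{V}_L)\to\operatorname{Dom}(\mathcal{V}_H)$ is constructive if there is an \emph{intervariable clustering} $\mathbb{C}=\{\mathbf{C}_{V_H}\}_{V_H\in\mathcal{V}_H}$ of $\mathcal{V}_L$ into pairwise disjoint subsets indexed by the high-level variables. It must also admit a decomposition $\tau(\mathbf{v}_L)=\big(\tau_{V_H}(\mathbf{c}_{V_H})\big)_{V_H\in\mathcal{V}_H}$, so that each high-level coordinate depends only on the values of its own cluster. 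Each component $\tau_{V_H}:\operatorname{Dom}(\mathbf{C}_{V_H})\to\operatorname{Dom}(V_H)$ should be a well-defined function, and the components should be surjective onto the high-level domains. The plan is therefore to exhibit the clustering, exhibit the component maps, and check these properties.

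First, we define the clustering $\mathbf{C}_W=\{W\}$ and $\mathbf{C}_{A_t}=\{X_t\}$ for $t=0,\ldots,T$. These sets are singletons of distinct low-level variables, so they are pairwise disjoint. Their union is exactly $\mathcal{V}_L=\{W,X_T,\ldots,X_0\}$, so the clustering is in fact a partition. In particular no low-level variable is shared between two high-level variables. We note that the exogenous $U$ is excluded from $\mathcal{V}_L$, so it raises no issue.

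Second, we take the component maps to be $\tau_W=\mathrm{id}$ on $\mathcal{W}$ and $\tau_{A_t}=\tau_t$ on $\mathcal{X}$, with the codomain of $\tau_t$ taken to be $\mathcal{A}_t$. By construction, $\tau(w,x_T,\ldots,x_0)=(w,\tau_T(x_T),\ldots,\tau_0(x_0))$. This is the product of the component maps, so each high-level coordinate is a function of its own cluster only.

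Third, we need each component to be onto its high-level domain. This is immediate if we take $\mathcal{A}_t\defeq\tau_t(\mathcal{X})$, meaning the image of $\tau_t$, which is a subset of $[0,1]$ as the construction allows. The identity is trivially onto $\mathcal{W}$.

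The main obstacle is matching the paper's informal construction to the precise definition in \cite{xia2025causal}. That definition may additionally require that the intervention sets and their images be compatible, i.e.\ that interventions on $\mathcal{V}_H$ correspond to interventions on the clusters. For this we will argue that, because the clustering is a partition and $\tau$ factorizes coordinate-wise, every high-level intervention $\doop(\mathbf{V}_H=\mathbf{v}_H)$ has nonempty preimage $\doop(\mathbf{C}=\mathbf{c})$ with $\tau_{\mathbf{C}}(\mathbf{c})=\mathbf{v}_H$, which follows from the surjectivity established in the third step. We will also remark that lossiness, i.e.\ non-injectivity of $\tau_t$, is allowed in a constructive abstraction. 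This matters because it is exactly the feature that later permits latent confounding in the projected graph.
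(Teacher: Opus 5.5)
Your proposal is correct and takes essentially the same route as the paper: singleton intervariable clusters $\{W\}$ and $\{X_t\}$ in bijection with the high-level variables, a coordinate-wise factorization of $\tau$, and $\mathcal{A}_t=\tau_t(\mathcal{X})$ so that each component is onto. The paper states your surjectivity step as an explicit intravariable clustering by the fibers $\tau_t^{-1}(a_t)$, which are nonempty, disjoint, cover $\mathcal{X}$, and correspond bijectively to $\mathcal{A}_t$; that is the form the definition in \cite{xia2025causal} literally requires, and your extra remark about interventions is harmless but not needed.
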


Proposition~\ref{prop:constructive-abstraction} establishes that
$\tau$ forms a structurally valid variable-wise abstraction from
$\mathcal{V}_L$ to $\mathcal{V}_H$. It therefore supports the
interpretation of $\mathcal{M}_H$ as a causal abstraction of the
diffusion process. 

\paragraph{Partially projected C-DAG.}
We construct the causal graph of $\mathcal{M}_H$ using the partially
projected C-DAG of \cite{xia2025causal}. The construction begins with
the C-DAG obtained by clustering the low-level variables according to
$\tau$. Let $\mathcal{G}=(\mathcal{V}_H,
\mathcal{E})$ denote this C-DAG, and let
$\mathcal{V}_H^\dagger\subseteq\mathcal{V}_H$ denote the set of
high-level variables whose mappings violate the Abstract Invariance
Condition (AIC). A variable belongs to
$\mathcal{V}_H^\dagger$ when its mapping merges low-level values that
have different causal effects on a downstream variable.

The partially projected C-DAG
$\mathcal{G}^\dagger
=(\mathcal{V}_H,\mathcal{E}^\dagger)$ is initialized
with
$\mathcal{E}^\dagger
=\mathcal{E}$. For every
$X\in\mathcal{V}_H^\dagger$, the following edge-projection rules are
then applied:
\[
\begin{aligned}
    Z\rightarrow X\rightarrow Y
    &\quad\Longrightarrow\quad \textrm{add } Z\rightarrow Y,\\
    Z\dashleftrightarrow X\rightarrow Y
    &\quad\Longrightarrow\quad
        \textrm{add } Z\dashleftrightarrow Y
        \ \text{and}\ 
        X\dashleftrightarrow Y,\\
    Z\leftarrow X\rightarrow Y
    &\quad\Longrightarrow\quad \textrm{add } Z\dashleftrightarrow Y.
\end{aligned}
\]
The rules are applied iteratively until no additional edges are
introduced.

Conceptually, this construction retains the information discarded by
an AIC-violating feature mapping as latent information. Marginalizing
this information can induce additional directed dependencies and
unobserved confounding among the retained high-level variables. The
additional edges ensure that the high-level graph does not impose
conditional-independence or causal-exclusion constraints that are
invalid under the lossy mapping. 

\paragraph{Projected graph for the diffusion process.}
By omitting the fixed prompt \(c\) from the graph, the directed part of the low-level diffusion graph is
\[
    X_T\rightarrow X_{T-1}\rightarrow\cdots\rightarrow X_0
    \quad \textrm{and} \quad 
    W\rightarrow X_t
\]
for each \(t\in\{0,\ldots,T-1\}\)
because each denoising transition depends on both the current
diffusion state and the CFG scale. Since each intervariable cluster
contains a single diffusion state, the corresponding C-DAG \(\mathcal{G}\) has edges
\[
    \{A_t\rightarrow A_{t-1}:1\leq t\leq T\}
    \cup
    \{W\rightarrow A_t:0\leq t<T\}.
\]
Without loss of generality, we treat every nonterminal state mapping as potentially AIC-violating, so that
\(
    \mathcal{V}_H^\dagger
    =
    \{A_T,A_{T-1},\ldots,A_1\}.
\)
Then, applying the partially projected C-DAG construction adds edges
whenever a mapped state belongs to
$\mathcal{V}_H^\dagger$. For example, since $A_t\in\mathcal{V}_H^\dagger$, the path $A_{t+1}\rightarrow A_t\rightarrow A_{t-1}$ induces the additional
edge $A_{t+1}\rightarrow A_{t-1}$. Repeated application of this rule introduces dependencies between a diffusion state and all downstream states. The latent information discarded by $\tau_t$ acts as a hidden confounder of these downstream states, producing the
bidirected edges prescribed by the remaining projection rules. Consequently, the closure of the projection rules adds to the partially projected C-DAG \(\mathcal{G}^{\dagger}\) the directed edges
\[
    \{W\rightarrow A_t:0\leq t<T\}
    \cup
    \{A_s\rightarrow A_t:0\leq t<s\leq T\},
\]
together with the bidirected edges
\[
    \{A_s\dashleftrightarrow A_t:0\leq t<s<T\}.
\]
The final graph is shown in Figure~\ref{fig:partially-projected-cdag}.

\begin{figure}[t]
\centering
\resizebox{0.82\columnwidth}{!}{%
\begin{tikzpicture}[
    >=latex,
    thick,
    state/.style={circle, draw, very thick, minimum size=1.cm, inner sep=0pt, font=\normalsize}
]

\node[state] (AT) at (0,0) {$A_T$};
\node[state] (AT1) at (2.2,0) {$A_{T-1}$};
\node[font=\Large] (dots) at (4.1,0) {$\dots$};
\node[state] (A1) at (6,0) {$A_1$};
\node[state] (A0) at (8.2,0) {$A_0$};

\node[state] (W) at (4.1, 2.) {$W$};

\draw[->] (W) -- (AT);
\draw[->] (W) -- (AT1);
\draw[->] (W) -- (A1);
\draw[->] (W) -- (A0);

\draw[->] (AT) -- (AT1);
\draw[->] (A1) -- (A0);

\draw[->] (AT) to[bend left=20] (A1);
\draw[->] (AT) to[bend left=25] (A0);

\draw[<->, dashed] (A1) to[bend left=30] (AT1);
\draw[<->, dashed] (A0) to[bend left=40] (AT1);
\draw[<->, dashed] (A0) to[bend left=30] (A1);

\end{tikzpicture}%
}
\caption{Partially projected high-level C-DAG. Solid and dashed
bidirected edges represent directed causal relations and latent
dependence induced by partial projection, respectively.}
\label{fig:partially-projected-cdag}
\end{figure}

\begin{figure}[t]
\centering
\resizebox{0.82\columnwidth}{!}{%
\begin{tikzpicture}[
    >=latex,
    thick,
    state/.style={circle, draw, very thick, minimum size=1.cm, inner sep=0pt, font=\normalsize}
]

\node[state] (AT) at (0,0) {$A_T$};
\node[state] (AT1) at (2.2,0) {$A_{T-1}$};
\node[font=\Large] (dots) at (4.1,0) {$\dots$};
\node[state] (A1) at (6,0) {$A_1$};
\node[state] (A0) at (8.2,0) {$A_0$};

\node[state] (W) at (4.1, 2.) {$W$};

\draw[->] (W) -- (AT);
\draw[->] (W) -- (AT1);
\draw[->] (W) -- (A1);
\draw[->] (W) -- (A0);

\draw[->] (AT) -- (AT1);
\draw[->] (A1) -- (A0);

\draw[->] (AT) to[bend left=20] (A1);
\draw[->] (AT) to[bend left=25] (A0);

\end{tikzpicture}%
}
\caption{Causal graph induced by the transformer NCM \(\widehat{\mathcal{M}}_H(\theta)\).}
\label{fig:ncm-graph}
\end{figure}

The following results show that if a high-level SCM \(\mathcal{M}_H\) is compatible with \(\mathcal{G}^{\dagger}\) and reproduces the observational high-level distribution, then it recovers any identifiable causal query \(\tau(Q_{w,c})\) that is identifiable in \(\mathcal{G}^{\dagger}\). Consequently, $\mathcal{M}_H$ is $Q_{w,c}$--$\tau$ consistent with
$\mathcal{M}_L$.

\begin{proposition}[{\cite[Theorem~2]{xia2025causal}}]
\label{prop:cdag-suffi-nece}
Let $\mathcal{M}_L$ be a low-level SCM over variables $\mathcal{V}_L$, and let
\(
\tau:\operatorname{Dom}(\mathcal{V}_L)
\to
\operatorname{Dom}(\mathcal{V}_H)
\)
be a constructive abstraction function.
Let $\mathcal{V}_H^\dagger$ denote the set of AIC-violating high-level variables. Then the partially projected C-DAG $\mathcal{G}^{\dagger}$ induced by $\mathcal{M}_L$ with respect to
the abstraction and $\mathcal{V}_H^\dagger$ completely describes
the causal constraints over $\mathcal{V}_H$.
\end{proposition}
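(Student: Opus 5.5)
This result is cited directly from \cite[Theorem~2]{xia2025causal}, so the plan is to verify that our setting meets its hypotheses and then reconstruct the logic of the cited proof in two halves: \emph{soundness} and \emph{completeness}.

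\textbf{Hypotheses.} The abstraction $\tau$ is constructive by Proposition~\ref{prop:constructive-abstraction}. The set $\mathcal{V}_H^\dagger$ is a fixed set of AIC-violating variables. In our application we take it to be $\{A_T,\ldots,A_1\}$, which over-approximates the true violators. We should therefore check that over-including variables in $\mathcal{V}_H^\dagger$ only adds edges, so the resulting graph is still sound, just possibly less informative.

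\textbf{Soundness.} The goal is to show that every constraint encoded by $\mathcal{G}^\dagger$ actually holds in the induced high-level distributions. I would proceed as follows.
\begin{enumerate}
\item Build an explicit high-level SCM from $\mathcal{M}_L$. For each $X\in\mathcal{V}_H^\dagger$, introduce a latent variable $R_X$ carrying the information that $\tau_X$ discards, so that the low-level cluster is a bijective function of $(A_X,R_X)$.
\item Each child $Y$ of $X$ then depends on $(A_X,R_X)$. Since $R_X$ is unobserved, $Y$ must also depend on $X$'s parents. This gives the rule $Z\to X\to Y \Rightarrow Z\to Y$.
\item The latent $R_X$ is shared among $X$'s children and is correlated with $X$'s confounders. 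This gives the two bidirected-edge rules.
\item Iterate steps 2--3 to closure. Because the low-level graph is acyclic, this terminates.
\item Conclude that the interventional distributions $P^{\mathcal{M}_H}(\cdot\mid\doop(\cdot))$ induced through $\tau$ factorize according to $\mathcal{G}^\dagger$, i.e.\ they satisfy the truncated c-component factorization.
\end{enumerate}
The constructive (variable-wise, cluster-based) property is what guarantees that interventions on high-level variables correspond to well-defined interventions on their low-level clusters, so that step 5 is meaningful.

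\textbf{Completeness.} The goal is to show that no further constraint holds in general: every SCM compatible with $\mathcal{G}^\dagger$ arises as an abstraction of some low-level model with the same C-DAG and the same AIC-violation pattern. The approach is a canonical construction in which each projected edge is realized by a low-level model whose discarded component $R_X$ genuinely transmits the relevant dependence. Concretely:
\begin{itemize}
\item a directed edge $Z\to Y$ is realized by letting $Z$ copy its value into $R_X$, which $Y$ then reads;
\item a bidirected edge is realized by a shared noise routed through $R_X$.
\end{itemize}
One then encodes an arbitrary SCM compatible with $\mathcal{G}^\dagger$ into these channels, which shows its distributions are reachable.

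\textbf{Main obstacle.} I expect completeness to be the hard part. The difficulty is ensuring that the routed information does not break the AIC status of variables outside $\mathcal{V}_H^\dagger$ and that it respects the cluster structure. My first attempt would follow the canonical-SCM encoding used for neural causal models, and otherwise defer to the cited proof.
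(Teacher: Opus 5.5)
The paper gives no argument of its own for this statement. It imports Theorem~2 of \cite{xia2025causal}, with Proposition~\ref{prop:constructive-abstraction} supplying constructiveness, so your plan of checking hypotheses and deferring to the citation is what the paper does. Your remark that enlarging $\mathcal{V}_H^\dagger$ only adds edges (sound, though no longer tight) is correct, and it is what makes the paper's ``without loss of generality'' legitimate. Proposition~\ref{prop:high-level-identifiability} and Corollary~\ref{cor:audit-consistency} use only soundness---indeed only that $W$ is a root with no incident bidirected edge---so the completeness half you single out as the main obstacle is never needed.

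Your reconstruction of the soundness half, however, has a genuine gap. You assert that constructiveness makes high-level interventions correspond to well-defined low-level ones, and it does not. Constructiveness only says that $\doop(A_X=a)$ corresponds to the whole fiber $\tau_X^{-1}(a)$ of low-level interventions. Whether these all induce the same high-level effect is exactly the AIC, which fails by definition for every $X\in\mathcal{V}_H^\dagger$; resolving that ambiguity is the purpose of projected abstractions. Until you fix a semantics for interventions on $\mathcal{V}_H^\dagger$, your Step~5 has no content there. Moreover, the natural reading of your Step~1 (set the abstract coordinate, leave $R_X$ at its natural value) makes the stated $\mathcal{G}^\dagger$ false.

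For example, let $U\sim\mathcal{N}(0,1)$, $X_L=(U,U)\in\mathbb{R}^2$, $\tau_X(x)=x_1$, and $Y_L=x_2$ with $\tau_Y$ the identity. AIC fails at $X$, because the interventions $(0,0)$ and $(0,1)$ share $\tau_X=0$ but give $Y=0$ and $Y=1$. Yet $X$ has no parent, no bidirected edge, and a single child, so no rule fires and $\mathcal{G}^\dagger$ is just $X\rightarrow Y$. That graph forces $P(Y\mid\doop(A_X=a))=P(Y\mid A_X=a)=\delta_a$, whereas your construction gives $Y=U\sim\mathcal{N}(0,1)$ under $\doop(A_X=a)$.

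The omission is in Step~3. An arbitrary bijective complement $R_X$ generally shares $X$'s own exogenous noise with $A_X$, so projecting it out creates $A_X\dashleftrightarrow Y$ for the children $Y$ of $X$. The stated rules, by contrast, add $X\dashleftrightarrow Y$ only when $X$ carries a bidirected edge. To obtain exactly these rules you must fix the semantics differently. For instance, take $R_X$ conditionally independent of $A_X$ given the parents of $X$, so that keeping $R_X$ at its natural value makes $\doop(A_X=a)$ sample the low-level state from its conditional law on $\tau_X^{-1}(a)$. Then only confounding that $X$ shares with other variables reappears as $X\dashleftrightarrow Y$, consistent with the second rule.

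None of this affects the paper's use of the result. The audit intervenes only on $W$, whose map $\tau_W$ is the identity, and $W$ remains a root without bidirected edges under any of these choices. (Minor: the closure terminates because only finitely many edges can be added; acyclicity of the low-level graph is what keeps the added directed edges acyclic.)
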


\begin{proposition}[Identifiability of the High-Level Audit Query]
\label{prop:high-level-identifiability}
Under the partially projected C-DAG
$\mathcal{G}^{\dagger}$ defined above, for every $a_T$ and $w$
in the support of the observational high-level distribution, the
high-level causal query $\tau(Q_{w,c})$ in
Eq.~\eqref{eq:high-level-query},
\(
P^{\mathcal{M}_H}
\bigl(a_0 \mid a_T,c,\doop(w)\bigr),
\)
is identifiable from the observational high-level distribution.
\end{proposition}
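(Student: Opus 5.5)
The plan is to apply Pearl's do-calculus directly to the partially projected C-DAG $\mathcal{G}^{\dagger}$. By Proposition~\ref{prop:cdag-suffi-nece}, $\mathcal{G}^{\dagger}$ completely describes the causal constraints over $\mathcal{V}_H$, so any do-calculus reduction valid in $\mathcal{G}^{\dagger}$ is valid for every high-level SCM compatible with it. The target is the conditional interventional quantity $P^{\mathcal{M}_H}(a_0\mid a_T,c,\doop(w))$. I will show it equals the observational conditional $P^{\mathcal{M}_H}(a_0\mid a_T,w,c)$ by rule~2 (action/observation exchange) with $Z=\{A_T\}$ as the conditioning set.

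\textbf{Step 1: read off the edges incident to $W$.} From the closure described above, the only edges touching $W$ are the directed edges $W\to A_t$; possibly $W\to A_T$ is also present, as drawn in Figure~\ref{fig:partially-projected-cdag}. All bidirected edges are of the form $A_s\dashleftrightarrow A_t$ with $0\le t<s<T$, so none of them involves $W$ or $A_T$. $W$ has no parents, since it is an intervention variable whose only low-level parent is the exogenous choice of guidance scale. None of the projection rules can create an edge into $W$ or a bidirected edge at $W$: $W\notin\mathcal{V}_H^\dagger$, and every rule only adds edges whose endpoints are children of an AIC-violating node or its parents/confounded partners. $W$ is none of these, because it has no incoming or bidirected edges to start with.

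\textbf{Step 2: apply rule~2 in $\mathcal{G}^{\dagger}_{\underline{W}}$.} Rule~2 licenses
\[
P(a_0\mid a_T,\doop(w)) = P(a_0\mid a_T,w)
\]
whenever $(A_0 \perp W \mid A_T)$ holds in $\mathcal{G}^{\dagger}_{\underline{W}}$, the graph with all edges out of $W$ removed. By Step~1, $W$ is an isolated node in $\mathcal{G}^{\dagger}_{\underline{W}}$, so it is d-separated from $A_0$ given any conditioning set. Rule~2 places no requirement that $A_T$ be a non-descendant of $W$, so it does not matter whether the edge $W\to A_T$ is present.

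\textbf{Step 3: well-definedness.} The hypothesis that $(a_T,w)$ lies in the support of the observational high-level distribution guarantees $P(a_T,w)>0$. Hence the ratio $P(a_0,a_T\mid w)/P(a_T\mid w)$ is uniquely determined by the observational distribution, and this gives identifiability. Marginalizing the identified expression over $P(a_T\mid w)$ also identifies $P(a_0\mid c,\doop(w))$, which Proposition~\ref{prop:audit-query-recovery} uses.

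I expect the main obstacle to be Step~1, namely making rigorous that the iterative projection closure never produces a bidirected edge incident to $W$ or to $A_T$. My first attempt is an induction over rule applications with the invariant ``no edge into $W$ and no bidirected edge at $W$.'' Each rule adds either $Z\to Y$ or a bidirected edge between nodes that are children of the projected node $X$, or $X$ itself. $W$ can never be such a $Y$ or $X$, because $W$ has no incoming or bidirected edges in the initial graph and the invariant is preserved. The remaining case is $Z=W$ in the first rule, which only adds edges out of $W$; these are harmless because rule~2 deletes them.
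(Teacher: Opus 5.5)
Your proof is correct and follows essentially the paper's argument. Like the paper, it rests on $W$ having no parents and no incident bidirected edges in $\mathcal{G}^{\dagger}$; the paper phrases the reduction $P(a_0\mid a_T,c,\doop(w))=P(a_0\mid a_T,c,w)$ as the backdoor criterion with the empty adjustment set, while you use rule~2 of the do-calculus, which is slightly cleaner in two respects: it licenses conditioning on $A_T$ without requiring $A_T$ to be a non-descendant of $W$ (the paper calls $A_T$ ``pre-intervention'' even though Figure~\ref{fig:partially-projected-cdag} draws $W\to A_T$), and your induction over rule applications supplies the justification, which the paper only asserts, that the projection closure never creates an edge into $W$ or a bidirected edge at $W$.
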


\begin{corollary}[$Q$--$\tau$ Consistency of the High-Level SCM]
\label{cor:audit-consistency}
Let $\mathcal{G}^{\dagger}$ be the partially projected C-DAG
induced by the low-level SCM $\mathcal{M}_L$ and the constructive
abstraction function $\tau$. Suppose that the high-level SCM
$\mathcal{M}_H$ is compatible with $\mathcal{G}^{\dagger}$ and
reproduces the observational high-level distribution induced by
$\mathcal{M}_L$ under $\tau$. Then, for every $w$ and $c$ for
which Proposition~\ref{prop:high-level-identifiability} applies,
$\mathcal{M}_H$ is $Q_{w,c}$--$\tau$ consistent with
$\mathcal{M}_L$, that is,
\(
(Q_{w,c})^{\mathcal{M}_L}
=
\tau(Q_{w,c})^{\mathcal{M}_H}.
\)
\end{corollary}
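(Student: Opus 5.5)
The argument chains Proposition~\ref{prop:cdag-suffi-nece} and Proposition~\ref{prop:high-level-identifiability} into a standard "identifiability plus matching observational distribution" argument, and then relates the resulting high-level query to the low-level query $Q_{w,c}$ through the abstraction map. Throughout, fix $c$, $w$, and $a_T$ in the support of the observational high-level distribution. Let $P^{\tau}$ denote the observational distribution over $\mathcal{V}_H$ induced by pushing $\mathcal{M}_L$ forward through $\tau$.

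\textbf{Step 1: the $\tau$-image of $\mathcal{M}_L$ is a model compatible with $\mathcal{G}^\dagger$.} By Proposition~\ref{prop:constructive-abstraction}, $\tau$ is constructive. By Proposition~\ref{prop:cdag-suffi-nece}, $\mathcal{G}^\dagger$ completely describes the causal constraints over $\mathcal{V}_H$ induced by $\mathcal{M}_L$. So the interventional distributions obtained by pushing $\mathcal{M}_L$ through $\tau$ are generated by some high-level SCM $\mathcal{M}_H^{\tau}$ compatible with $\mathcal{G}^\dagger$. In particular:
\begin{itemize}
\item the observational distribution of $\mathcal{M}_H^{\tau}$ is $P^{\tau}$;
\item its query $P^{\mathcal{M}_H^\tau}(A_0\le a_0\mid a_T,c,\doop(w))$ equals the low-level interventional probability $P^{\mathcal{M}_L}(\tau_0(X_0)\le a_0\mid \tau_T(X_T)=a_T,c,\doop(w))$.
\end{itemize}

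\textbf{Step 2: the given $\mathcal{M}_H$ returns the same query.} By Proposition~\ref{prop:high-level-identifiability}, the query $P(a_0\mid a_T,c,\doop(w))$ is identifiable in $\mathcal{G}^\dagger$. Concretely, $W$ is a root with no bidirected edges, so the query reduces to $P(a_0\mid a_T,w)$. Identifiability means that any two SCMs compatible with $\mathcal{G}^\dagger$ and sharing the same observational distribution agree on the query. Both $\mathcal{M}_H$ (by hypothesis) and $\mathcal{M}_H^\tau$ (by Step 1) satisfy this with observational distribution $P^\tau$. Hence
\[
\tau(Q_{w,c})^{\mathcal{M}_H}(a_0\mid x_T) = P^{\mathcal{M}_L}\bigl(A_0\le a_0\mid \tau_T(X_T)=a_T,c,\doop(w)\bigr)
\]
for every $x_T$ with $\tau_T(x_T)=a_T$.

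\textbf{Step 3: rewrite the right-hand side as the consistency condition.} Under $\doop(w)$, $X_T$ is not a descendant of $W$, so its distribution is unaffected by the intervention. By the tower property over $X_T$ given $\tau_T(X_T)=a_T$,
\[
P^{\mathcal{M}_L}\bigl(A_0\le a_0\mid \tau_T(X_T)=a_T,c,\doop(w)\bigr)=\mathbb{E}_{X_T\mid \tau_T(X_T)=a_T}\bigl[Q_{w,c}(a_0\mid X_T)\bigr],
\]
using definition \eqref{eq:low-level-query}. Combining this with Step 2 gives exactly \eqref{eq:q-tau-consistency}, which is the stated consistency $(Q_{w,c})^{\mathcal{M}_L}=\tau(Q_{w,c})^{\mathcal{M}_H}$.

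\textbf{Main obstacle.} The delicate point is Step 1. Proposition~\ref{prop:cdag-suffi-nece} as quoted asserts only that $\mathcal{G}^\dagger$ "completely describes the causal constraints." I need the stronger reading that the $\tau$-pushforward of $\mathcal{M}_L$'s interventional distributions (for $\doop(w)$) is realizable by some SCM compatible with $\mathcal{G}^\dagger$. I would justify this from the construction in \cite{xia2025causal}: the discarded low-level information of AIC-violating variables is treated as latent exogenous noise, and the projection rules add precisely the edges needed so that marginalizing this noise yields a $\mathcal{G}^\dagger$-compatible SCM. Since $W$ has the identity mapping and is AIC-respecting, this SCM commutes with $\doop(w)$. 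A support caveat also applies: the statement holds only for $a_T$ and $w$ in the positive support of $P^\tau$, matching the restriction in Proposition~\ref{prop:high-level-identifiability}.
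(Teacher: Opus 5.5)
Your proposal is correct and is essentially the paper's argument: the paper combines Proposition~\ref{prop:high-level-identifiability} with the dual abstract identification result of \cite{xia2025causal} to conclude that observational $\tau$-consistency forces $Q_{w,c}$--$\tau$ consistency, and your Steps~1--2 unpack that cited result. Those steps are a $\mathcal{G}^{\dagger}$-compatible model realizing the $\tau$-image of $\mathcal{M}_L$ under $\doop(w)$, followed by uniqueness from identifiability, and the cited result rests on exactly the realizability you flag as the main obstacle. Your Step~3 links the conditional query to the fiber-averaged form in Definition~\ref{def:q-tau-consistency} via the tower property and the fact that $X_T$ is unaffected by the intervention, a detail the paper's proof leaves implicit.
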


\subsection{Transformer Neural Causal Model}
\label{sec:transformer-ncm}

Next, we develop a transformer neural causal model (NCM) \cite{xia2024neural} as an amortized
predictor of the high-level query $\tau(Q_{w,c})$, assuming
that the component mappings
$\tau_W,\tau_T,\ldots,\tau_0$ are given and fixed. Applying these
mappings to a low-level diffusion trajectory produces the high-level
trajectory
\(
    \bigl(
        w,\tau_T(x_T),\tau_{T-1}(x_{T-1}),
        \ldots,\tau_0(x_0)
    \bigr),
\)
which is used for training.

\paragraph{Transformer neural causal model.}
We represent the high-level SCM using a neural causal model (NCM),
\[
\widehat{\mathcal{M}}_H(\theta)
=
\left\langle
\widehat{\mathcal{U}},
\mathcal{V}_H,
\widehat{\mathcal{F}}_\theta,
P(\widehat{\mathcal{U}})
\right\rangle,
\]
where \(\mathcal{V}_H=\{W,A_T,\ldots,A_0\}\) and
\(\widehat{\mathcal{F}}_\theta\) consists of probabilistic neural
structural mechanisms jointly parameterized by a transformer. The
high-level states are ordered according to the denoising process as
\((A_T,A_{T-1},\ldots,A_0)\). When predicting \(A_t\), the causal
attention mask allows the transformer to access the complete causal
prefix \(A_{t+1:T}=(A_{t+1},\ldots,A_T)\), together with the fixed
prompt \(c\) and CFG scale \(w\). The transformer therefore
parameterizes
\(
P_\theta(a_t\mid a_{t+1:T},c,w)
\)
for every \(t\in\{0,\ldots,T-1\}\).
Given \(N\) mapped high-level trajectories, the NCM is trained by
minimizing the negative log-likelihood
\[
\widehat{\mathcal{L}}_N(\theta)
=
-\frac{1}{N}
\sum_{i=1}^{N}
\sum_{t=0}^{T-1}
\log
p_\theta
\left(
a_t^{(i)}
\mid
a_{t+1:T}^{(i)},c,w^{(i)}
\right).
\]

It is worth noting that the causal graph induced by \(\widehat{\mathcal{M}}_H(\theta)\),
denoted by \(\mathcal{G}_\theta\) and illustrated in Figure~\ref{fig:ncm-graph}, differs from the partially projected C-DAG
\(\mathcal{G}^{\dagger}\) in Figure~\ref{fig:partially-projected-cdag}. Specifically, \(\mathcal{G}_\theta\)
contains directed edges from the complete causal prefix but contains
no bidirected edges. Consequently,
\(\widehat{\mathcal{M}}_H(\theta)\) is not generally causally
equivalent to a model compatible with \(\mathcal{G}^{\dagger}\), and
the two models may disagree on interventions involving an intermediate
state \(A_t\).
Nevertheless, Proposition~\ref{prop:high-level-identifiability}
establishes that the target query \(\tau(Q_{w,c})\) is identifiable
from the high-level observational distribution. Moreover, the complete
directed ordering imposes no additional conditional-independence
restrictions among the high-level diffusion states. It follows that
the NCM recovers the target causal query whenever its learned joint
distribution agrees with the high-level observational distribution, as
formalized in the following result.

\begin{proposition}
\label{prop:likelihood-query-approximation}
Given the high-level SCM \(\mathcal{M}_H\) and the transformer NCM
\(\widehat{\mathcal{M}}_H(\theta)\) defined above, define the excess negative log-likelihood as
\[
\begin{aligned}
\varepsilon_\theta
\defeq
\mathbb{E}_{A_T,W}
\Big[
D_{\mathrm{KL}}\big(
&P^{\mathcal{M}_H}
(A_{T-1:0}\mid A_T,c,W)\\
&\big\|\,
P_\theta
(A_{T-1:0}\mid A_T,c,W)
\big)
\Big].
\end{aligned}
\]
Then,
\[
\begin{aligned}
\mathbb{E}_{A_T,W}
\Bigg[
\sup_{a_0\in[0,1]}
\Big|
&P_\theta
(a_0\mid A_T,c,\doop(W))\\
&-
P^{\mathcal{M}_H}
(a_0\mid A_T,c,\doop(W))
\Big|
\Bigg]
\leq
\sqrt{\frac{\varepsilon_\theta}{2}},
\end{aligned}
\]
that is, the causal query computed by
\(\widehat{\mathcal{M}}_H(\theta_n)\) converges to the corresponding
query in \(\mathcal{M}_H\).
\end{proposition}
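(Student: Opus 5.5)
The plan is to reduce both interventional quantities to the same observational functional: the conditional law of $A_0$ given $(A_T, W)$. Once that is done, the bound follows from data processing for KL divergence, then Pinsker's inequality, then Jensen's inequality.

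\emph{Step 1 (high-level side).} In $\mathcal{G}^{\dagger}$, $W$ has no parents and is not an endpoint of any bidirected edge. $A_T$ is also not an endpoint of any bidirected edge, since the bidirected edges only involve $A_s, A_t$ with $s,t<T$. So in $\mathcal{G}^{\dagger}_{\underline{W}}$, after removing the edges out of $W$, the only paths from $W$ are those that enter it. There are none, so $(A_0 \perp W \mid A_T)$ holds in that graph. Rule 2 of the do-calculus then gives
\[
P^{\mathcal{M}_H}(a_0\mid a_T,c,\doop(w)) = P^{\mathcal{M}_H}(a_0\mid a_T,c,w)
\]
on the support. This is the concrete content of Proposition~\ref{prop:high-level-identifiability}, and I would reprove it here in this explicit form.

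\emph{Step 2 (NCM side).} In $\mathcal{G}_\theta$ of Figure~\ref{fig:ncm-graph}, $W$ is again exogenous and there are no bidirected edges. Intervening on $W$ in the NCM therefore only fixes the conditioning value $w$ in each mechanism $P_\theta(a_t\mid a_{t+1:T},c,w)$. Hence
\[
P_\theta(a_0\mid a_T,c,\doop(w)) = P_\theta(a_0\mid a_T,c,w),
\]
which is the $A_0$-marginal of the autoregressive joint $P_\theta(A_{T-1:0}\mid a_T,c,w)$. I expect the main obstacle to be stating this carefully, in particular that the NCM's own $A_T$ mechanism is irrelevant because we condition on $A_T$. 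It is not hard, but it is where the graph mismatch between $\mathcal{G}_\theta$ and $\mathcal{G}^{\dagger}$ must be shown to be harmless.

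\emph{Step 3 (divergence bound).} Fix $(a_T,w)$. Marginalizing $A_{T-1:1}$ is a measurable map, so by the data-processing inequality
\[
D_{\mathrm{KL}}\bigl(P^{\mathcal{M}_H}(A_0\mid a_T,c,w)\,\|\,P_\theta(A_0\mid a_T,c,w)\bigr) \le D_{\mathrm{KL}}\bigl(P^{\mathcal{M}_H}(A_{T-1:0}\mid a_T,c,w)\,\|\,P_\theta(A_{T-1:0}\mid a_T,c,w)\bigr) \eqqcolon \kappa(a_T,w).
\]
For any $a_0$, the CDF difference $\sup_{a_0}|P_\theta(A_0\le a_0\mid\cdot)-P^{\mathcal{M}_H}(A_0\le a_0\mid\cdot)|$ is at most the total variation distance. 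By Pinsker's inequality this is at most $\sqrt{\kappa(a_T,w)/2}$. Using Steps 1 and 2 to replace the conditionals by the interventional quantities, we take expectation over $(A_T,W)$ and apply Jensen's inequality to the concave square root:
\[
\mathbb{E}\bigl[\sqrt{\kappa/2}\bigr] \le \sqrt{\mathbb{E}[\kappa]/2} = \sqrt{\varepsilon_\theta/2}.
\]
This is the claimed bound.

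Finally, the training objective $\widehat{\mathcal{L}}_N(\theta)$ equals $\varepsilon_\theta$ plus the $\theta$-independent conditional entropy, in population. So minimizing the likelihood drives $\varepsilon_\theta$ toward zero and yields the stated convergence of the query.
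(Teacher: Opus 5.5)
Your proposal is correct and follows the paper's proof essentially step for step. Both arguments first reduce the two interventional CDFs to observational conditionals given $(A_T,W)$, using that $W$ is a root with no incident bidirected edges; you use do-calculus Rule 2 where the paper cites the backdoor argument of Proposition~\ref{prop:high-level-identifiability}, which amounts to the same thing. Both then apply data processing, Pinsker, the CDF--total-variation bound, and Jensen. Your closing remark that the population negative log-likelihood equals $\varepsilon_\theta$ plus a $\theta$-independent conditional entropy is a correct addition the paper leaves implicit, though driving $\varepsilon_\theta$ to zero also requires that the model class can approximate the true conditionals in KL.
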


\begin{table*}[t]
    \small
    \centering
    \caption{Wasserstein-1 distance ($D$) and normalized discrepancy
    ratio ($R$) between low- and high-level final-attribute
    distributions.}
    \label{tab:fidelity-wasserstein-results}

    \begin{tabular}{ccccccc}
        \toprule
        & \multicolumn{2}{c}{Lawyer}
        & \multicolumn{2}{c}{Librarian}
        & \multicolumn{2}{c}{Scientist} \\
        \cmidrule(lr){2-3}
        \cmidrule(lr){4-5}
        \cmidrule(lr){6-7}

        CFG $w$
        & $D$ & $R$
        & $D$ & $R$
        & $D$ & $R$ \\
        \midrule

        2
        & $0.056 \pm 0.029$ & 0.804962
        & $0.070 \pm 0.022$ & 1.291038
        & $0.044 \pm 0.023$ & 0.444664 \\

        4
        & $0.031 \pm 0.020$ & 0.311368
        & $0.028 \pm 0.007$ & 0.923073
        & $0.058 \pm 0.027$ & 0.642809 \\

        6
        & $0.027 \pm 0.019$ & 0.221672
        & $0.032 \pm 0.010$ & 1.087213
        & $0.079 \pm 0.031$ & 1.081464 \\

        8
        & $0.024 \pm 0.017$ & 0.128558
        & $0.019 \pm 0.005$ & 0.829404
        & $0.070 \pm 0.031$ & 0.957128 \\

        10
        & $0.024 \pm 0.017$ & 0.112873
        & $0.011 \pm 0.003$ & 0.834020
        & $0.050 \pm 0.023$ & 0.573595 \\

        12
        & $0.033 \pm 0.022$ & 0.458389
        & $0.014 \pm 0.004$ & 1.178870
        & $0.063 \pm 0.023$ & 0.872928 \\

        15
        & $0.026 \pm 0.019$ & 0.180890
        & $0.014 \pm 0.004$ & 0.812535
        & $0.071 \pm 0.022$ & 1.058052 \\

        20
        & $0.027 \pm 0.020$ & 0.188876
        & $0.010 \pm 0.004$ & 0.496726
        & $0.093 \pm 0.018$ & 1.558177 \\

        \bottomrule
    \end{tabular}
\end{table*}

\section{Evaluation of Causal Abstraction Fidelity}
\label{sec:causal-abstraction-fidelity}


\subsection{Implementation Details}
\label{subsec:fidelity-implementation}

The low-level model is Stable Diffusion v1.5 with $T=50$ denoising
steps.  For each intermediate latent $x_t$, we decode the corresponding
image and use CLIP~\cite{radford2021learning} with paired masculine and
feminine text prompts to obtain the scalar abstraction
$a_t=\tau_t(x_t)\in[0,1]$.

The high-level model is a sliding-window causal transformer whose input
at timestep $t$ consists of the normalized attribute score, normalized
scheduler timestep, and CFG scale.  Given a causal history
$\mathcal{H}^{(L)}_t$ of at most $L$ states, the transformer models the
scaled attribute residual as
\begin{equation}
\begin{gathered}
    y_t=10\left(\widetilde a_{t-1}-\widetilde a_t\right),
    \qquad (\mu_t,r_t)=f_\theta(\mathcal{H}^{(L)}_t,w),\\
    y_t\mid\mathcal{H}^{(L)}_t,w
    \sim\mathcal{N}\!\left(
        \mu_t,\operatorname{softplus}(r_t)+10^{-6}
    \right).
\end{gathered}
\label{eq:fidelity-transformer}
\end{equation}
The model is trained using Gaussian negative log-likelihood and rolled
out autoregressively from $A_T$ to $A_0$.  Its architecture uses
5 transformer layers, 8 attention heads,
and hidden dimension 512. The detailed architecture is provided in the appendix.

\subsection{Experimental Setup}
\label{subsec:fidelity-setup}

We consider three gender-neutral prompts: \textit{``a portrait photo of
a lawyer''}; \textit{``a photo of a person working as a friendly
librarian at a cozy library, surrounded by bookshelves filled with
various genres''}; and \textit{``a photo of a person working as a lead
scientist in a cutting-edge laboratory, wearing a lab coat and
goggles.''} 
For each prompt and training CFG scale
$w\in\{0,1,3,5,7,9\}$, we generate trajectories for 1,000 seeds,
divided into 800 training and 200 held-out seeds. For evaluation,
we additionally generate low-level trajectories for the 200
held-out seeds at the unseen CFG scales
$w\in\{2,4,6,8,10,12,15,20\}$.
We use $w_{\mathrm{ref}}=8$ as the default reference scale and, unless otherwise specified, a transformer window size of $L=25$. For prompt $c$ and CFG scale $w$, fidelity is measured using the
Wasserstein-1 distance between the empirical low- and high-level
final-attribute distributions,
\begin{equation}
    D_{W_1}(w,c)
    =W_1\!\left(\widehat P_L^{w,c},\widehat P_H^{w,c}\right).
    \label{eq:fidelity-wasserstein}
\end{equation}
We estimate sampling variance using $B=1{,}000$ bootstrap resamples of
the 200 held-out seeds.  To calibrate finite-sample variation, we also
draw two independent bootstrap samples from the low-level scores and
compute their Wasserstein distance $D_{\mathrm{LL}}^{(b)}(w,c)$.
The normalized discrepancy ratio is
\begin{equation}
    R_{W_1}(w,c)
    =\frac{D_{W_1}(w,c)}
    {\operatorname{Quantile}_{0.95}
      \left(\{D_{\mathrm{LL}}^{(b)}(w,c)\}_{b=1}^{B}\right)}.
    \label{eq:fidelity-discrepancy-ratio}
\end{equation}
Thus, $R_{W_1}\leq1$ means the low-high discrepancy is no larger than the 95th-percentile low-low resampling reference.

{\bf\noindent Selection of window size.}
We evaluate candidate window sizes
$L\in\{1,5,10,15,20,25,30,35,40,45,50\}$ at $w_{\mathrm{ref}}$
using the Wasserstein distance on held-out validation trajectories. Based on the average performance across the three prompts,
we select $L=25$ for all subsequent experiments. Detailed
figures are provided in the appendix.

{\bf\noindent Fidelity of the final-attribute distribution.}
Table~\ref{tab:fidelity-wasserstein-results} reports distributional fidelity at the unseen CFG scales using $L=25$, with uncertainty estimated by bootstrap resampling of the held-out seeds. Across the 24 prompt--CFG configurations, the Wasserstein distance ranges from $0.010$ to $0.093$, and 18 configurations (75\%) satisfy $R\leq1$ (i.e., the low--high discrepancy does not exceed the bootstrap reference). 
Fidelity varies across prompts and scales.
 
Nevertheless, the overall results demonstrate that the high-level model generally preserves the final-attribute distribution at CFG scales not observed during training.

\begin{figure*}[t]
    \centering
    \begin{subfigure}[t]{0.32\textwidth}
        \centering
        \includegraphics[width=1\linewidth]
    {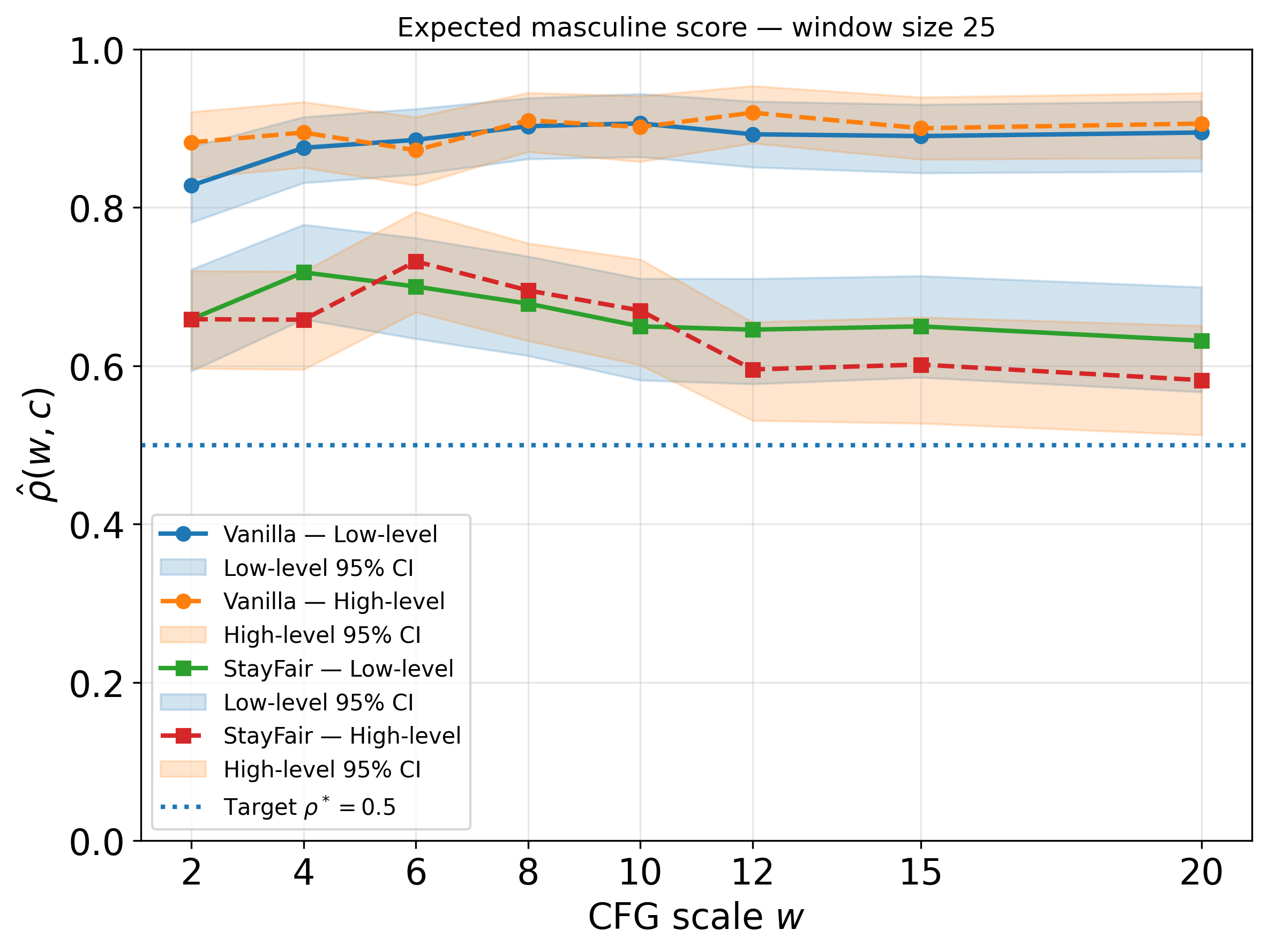}
        \caption{Lawyer}
    \end{subfigure}
    \hfill
    \begin{subfigure}[t]{0.32\textwidth}
        \centering
        \includegraphics[width=1\linewidth]
    {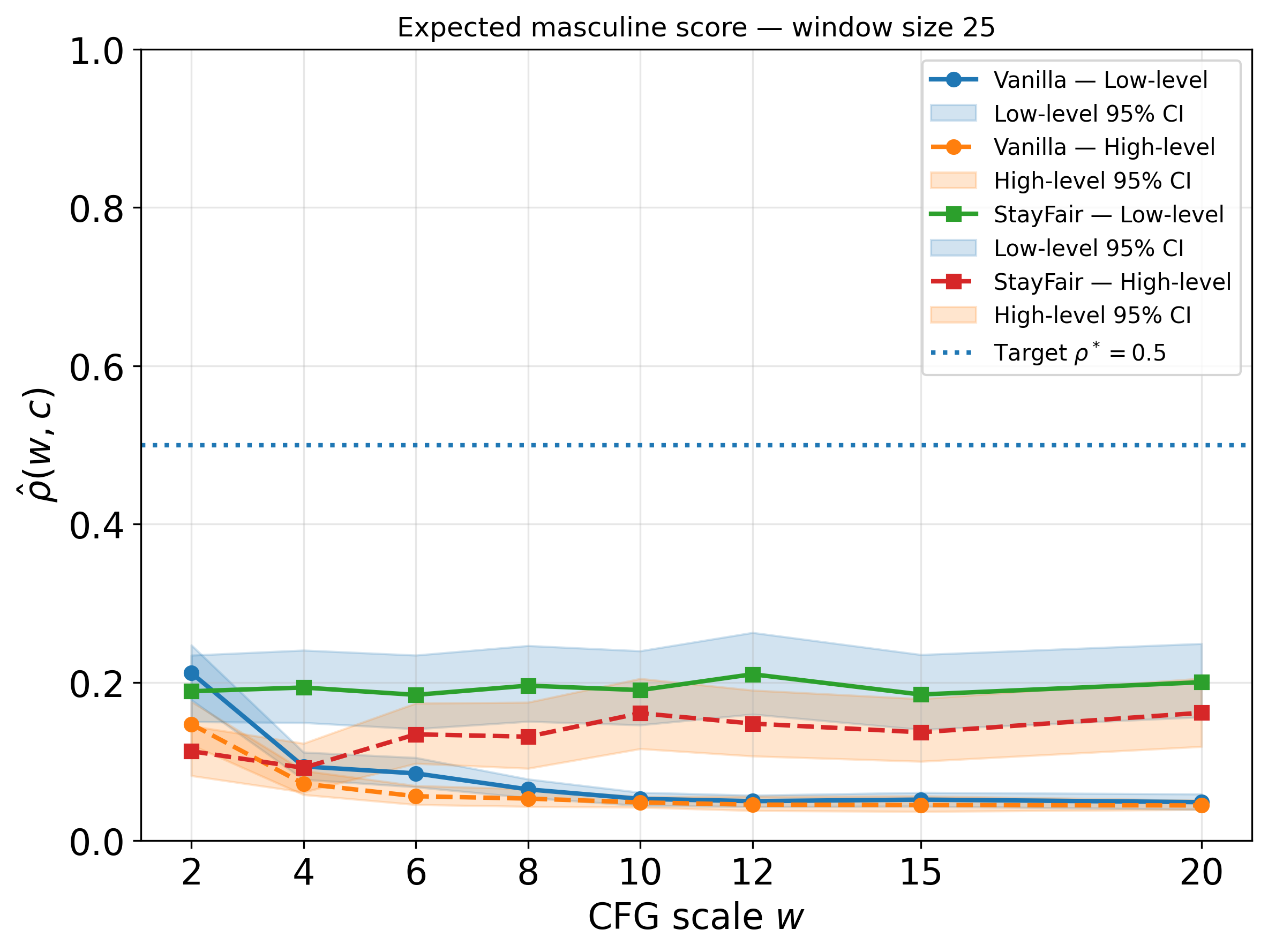}
        \caption{Librarian}
    \end{subfigure}
    \hfill
    \begin{subfigure}[t]{0.32\textwidth}
        \centering
        \includegraphics[width=1\linewidth]
    {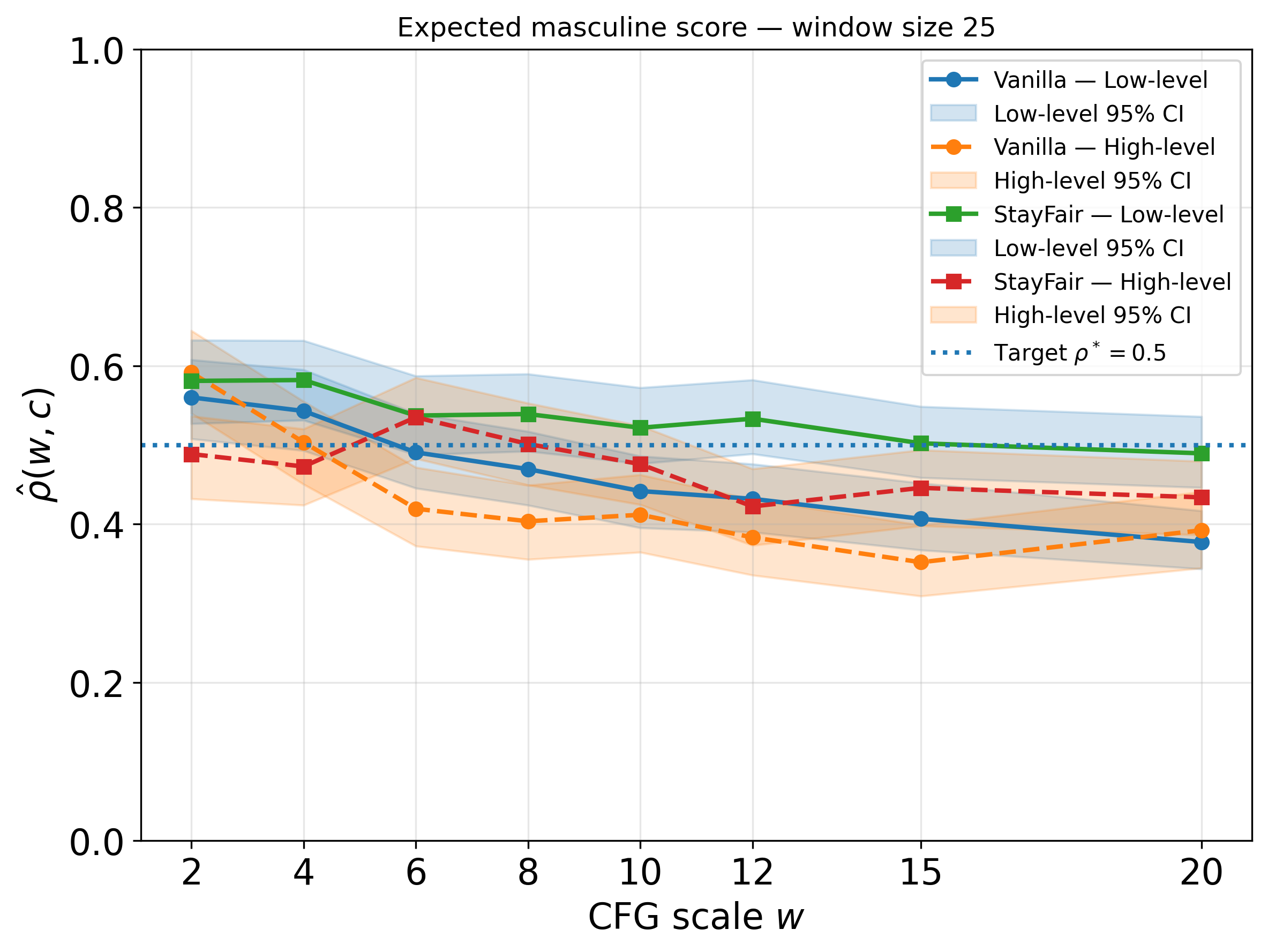}
        \caption{Scientist}
    \end{subfigure}
    \caption{Fairness audits across classifier-free guidance scales for
    standard Stable Diffusion v1.5 and SD1.5+StayFair.  For each model,
    solid and dashed curves show the low-level and abstract-model
    estimates, respectively.  The horizontal line at $0.5$ denotes the
    demographic-parity reference.}
    \label{fig:fairness-demo-guidance-curves}
\end{figure*}

{\bf\noindent Empirical assessment of AIC violation.}
We empirically assess whether the feature mapping preserves
the low-level information relevant to downstream attribute evolution.
At $t=35$, among matched held-out trajectory pairs $i,j$ satisfying
$\lvert a_{35}^{(i)}-a_{35}^{(j)}\rvert<0.005$, the distributions of
both $\lvert a_{34}^{(i)}-a_{34}^{(j)}\rvert$ and
$\lvert a_{0}^{(i)}-a_{0}^{(j)}\rvert$ remain widely dispersed, even as
$\lvert a_{35}^{(i)}-a_{35}^{(j)}\rvert$ approaches zero. This  
provides empirical evidence that the abstraction discards causally relevant information and violates the AIC. Detailed procedures and figures are provided in the appendix material.

\section{Fairness Audit Demonstrations}
\label{sec:fairness-audit-demonstrations}

We demonstrate the proposed auditing instrument on two configurations:
standard Stable Diffusion v1.5 and Stable Diffusion v1.5 equipped with
StayFair~\cite{kim2026stayfair}.  StayFair modifies classifier-free
guidance through a prompt-dependent null-embedding adjustment, allowing
us to compare standard and fairness-enhanced guidance while keeping the
underlying diffusion model fixed.
Audit conclusions are determined using the confidence interval and tolerance $\delta=0.10$.

{\bf\noindent Decision fidelity.}  
Figure~\ref{fig:fairness-demo-guidance-curves} compares the expected masculine scores estimated by the low- and high-level models. Overall, the high-level estimates reproduce the prompt- and scale-dependent trends of the low-level outputs. For the lawyer prompt, standard SD1.5 produces consistently high masculine scores, whereas
StayFair moves them toward the parity reference. For the librarian prompt, standard SD1.5 produces consistently low scores, and StayFair provides only a modest improvement. For the scientist prompt, both standard SD1.5 and StayFair produce scores near parity. Despite some numerical discrepancies, the abstract model captures these overall patterns and the fairness profiles underlying the audit.

{\bf\noindent Computational savings.}
We compare the per-sample inference time of the low- and high-level models. For the low-level diffusion model, runtime is measured from the initial state $x_T$
through the complete denoising process, VAE decoding, and CLIP scoring
of the final image to obtain $a_0$. Intermediate CLIP evaluations are
excluded because they are unnecessary when computing only the final
attribute. Averaged over 500 seeds, the low-level runtime is
$5.035$ seconds per sample.
For the high-level model, runtime is measured from the initial attribute
$a_T$ through generation of the complete high-level trajectory to
$a_0$. The average runtime is $0.259$ seconds per sample, corresponding
to a $19.4\times$ inference speedup, or a 94.9\% reduction in runtime,
relative to the low-level model.

{\bf\noindent Audit conclusions.}
For each prompt--CFG configuration, we generate $1{,}000$ samples using only the high-level audit instrument. We use these samples to estimate $\widehat{\Phi}_K$, construct its confidence interval, and determine
the audit outcome under $\alpha=0.10$ and $\delta=0.1$.
Table~\ref{tab:fairness-demo-summary} summarizes the results.

\begin{table}\small
    \centering
    \caption{Auditing results for standard SD1.5 and SD1.5+StayFair (supported, violated, or inconclusive).}
    \label{tab:fairness-demo-summary}
    \begin{tabular}{ccccc}
        \toprule
        & \multicolumn{2}{c}{Standard SD1.5}
        & \multicolumn{2}{c}{SD1.5 + StayFair} \\
        \cmidrule(lr){2-3}\cmidrule(lr){4-5}
        Prompt & $\widehat\Phi_K$ & Audit
               & $\widehat\Phi_K$ & Audit \\
        \midrule
        Lawyer
            & $[0.367,0.468]$ & Vio.
            & $[0.162,0.263]$ & Vio. \\
        Librarian
            & $[0.405,0.506]$ & Vio.
            & $[0.338,0.439]$ & Vio. \\
        Scientist
            & $[0.068,0.168]$ & Incon.
            & $[0.041,0.142]$ & Incon. \\
        \bottomrule
    \end{tabular}
\end{table}

\section{Conclusions}
We presented a query-specific causal-abstraction instrument for
efficient fairness auditing of text-to-image diffusion models. Our
analysis shows that an identifiable audit query can be recovered despite
lossy abstraction and differences between the learned and projected
causal graphs.
Across unseen guidance scales, the instrument generally preserved
final-attribute distributions and reproduced prompt- and scale-dependent
fairness trends, while achieving a significant reduction in per-sample inference time. The results also demonstrate the importance of auditing guidance settings and prompt formulations. Future work will extend the framework to broader prompt
distributions, multiple attributes, alternative high-level SCM implementations,
and additional generative-model families.

{
    \small
    \bibliographystyle{ieeenat_fullname}
    \bibliography{main}
}

\appendix
\section{Proof of Proposition 1}

\begin{proof}
Because \(A_T=\tau_T(X_T)\), the law of total probability gives
\[
\begin{split}
&P^{\mathcal{M}_L}
\left(
    \tau_0(X_0)\leq a_0
    \mid c,\doop(w)
\right)\\
&=
\mathbb{E}_{A_T}
\left[
    \mathbb{E}_{X_T\mid\tau_T(X_T)=A_T}
    \left[
        Q_{w,c}(a_0\mid X_T)
    \right]
\right].
\end{split}
\]
By \(Q_{w,c}\)-\(\tau\) consistency, the inner expectation equals the
corresponding high-level query. Therefore,
\[
\begin{split}
&P^{\mathcal{M}_L}
\left(
    \tau_0(X_0)\leq a_0
    \mid c,\doop(w)
\right)\\
&=
P^{\mathcal{M}_H}
\left(
    A_0\leq a_0
    \mid c,\doop(w)
\right).
\end{split}
\]
Equality of the target-feature distributions implies equality of their
expectations, so the two models induce the same value of
\(\rho(w,c)\).
\end{proof}

\section{Proof of Proposition 2}

\begin{proof}
Define the intervariable clustering
\[
    \mathcal{C}
    =
    \{\mathcal{C}_W,\mathcal{C}_T,\ldots,\mathcal{C}_0\},
    \qquad
    \mathcal{C}_W=\{W\},
    \quad
    \mathcal{C}_t=\{X_t\}.
\]
These clusters partition \(\mathcal{V}_L\). Moreover, there is a
bijection between the intervariable clusters and the high-level
variables: \(\mathcal{C}_W\) corresponds to \(W\), while
\(\mathcal{C}_t\) corresponds to \(A_t\).

For each \(\mathcal{C}_t\), define the intravariable clustering
\[
    \mathfrak{D}_t
    =
    \left\{
        \tau_t^{-1}(a_t):
        a_t\in\mathcal{A}_t
    \right\}.
\]
Because \(\mathcal{A}_t=\tau_t(\mathcal{X})\), the fibers of
\(\tau_t\) are nonempty, mutually disjoint, and cover
\(\mathcal{X}\). Therefore, \(\mathfrak{D}_t\) partitions the domain
of \(X_t\), and each fiber \(\tau_t^{-1}(a_t)\) corresponds to exactly
one high-level value \(a_t\). For \(W\), the intravariable clustering
consists of the singleton sets
\(\mathfrak{D}_W=\{\{w\}:w\in\mathcal{W}\}\).

Finally, \(\tau\) decomposes across the intervariable clusters as
\[
    \tau(w,x_T,\ldots,x_0)
    =
    \bigl(
        \tau_W(w),
        \tau_T(x_T),
        \ldots,
        \tau_0(x_0)
    \bigr).
\]
Thus, the intervariable clusters correspond bijectively to the
high-level variables, their intravariable clusters correspond
bijectively to the high-level values, and \(\tau\) acts separately on
each cluster. Hence, \(\tau\) is a constructive abstraction function.
\end{proof}

\section{Proof of Proposition 4}
\begin{proof}
Because \(W\) has neither parents nor incident bidirected edges, no
backdoor path from \(W\) to \(A_0\) exists, including after
conditioning on the pre-intervention variable \(A_T\). Therefore, the
empty set satisfies the backdoor criterion, and the intervention
\(\doop(w)\) can be replaced by conditioning on \(W=w\).
\end{proof}

\section{Proof of Corollary 5}
\begin{proof}
Let $\mathcal{Z}$ denote the observational distribution induced by
$\mathcal{M}_L$. By assumption, $\mathcal{M}_H$ is
$\mathcal{Z}$--$\tau$ consistent with $\mathcal{M}_L$.
By Proposition 4,
$\tau(Q_{w,c})$ is identifiable from
$\mathcal{G}^{\dagger}$ and the corresponding observational
high-level distribution $\tau(\mathcal{Z})$.
By the dual abstract identification result of
\cite{xia2025causal}, $Q_{w,c}$ is therefore $\tau$-identifiable
from $\mathcal{G}^{\dagger}$ and $\mathcal{Z}$.
Consequently, observational $\tau$-consistency implies
$Q_{w,c}$--$\tau$ consistency, so
\[
(Q_{w,c})_{\mathcal{M}_L}
=
\tau(Q_{w,c})_{\mathcal{M}_H}.
\]
\end{proof}

\section{Proof of Proposition 6}

\begin{proof}
By Proposition 4, the target
causal query in \(\mathcal{M}_H\) satisfies
\[
P^{\mathcal{M}_H}(a_0\mid a_T,c,\doop(w))
=
P^{\mathcal{M}_H}(a_0\mid a_T,c,w).
\]
The transformer NCM has the same property because \(W\) is an
intervenable root variable and its complete-prefix factorization
directly parameterizes the conditional distribution given \(w\).
Therefore,
\[
P_{\theta}(a_0\mid a_T,c,\doop(w))
=
P_{\theta}(a_0\mid a_T,c,w).
\]

For fixed \(a_T\) and \(w\), define
\[
\begin{aligned}
d_\theta(a_T,w)
&:=
D_{\mathrm{KL}}\Big(
P^{\mathcal{M}_H}
    (A_{T-1:0}\mid a_T,c,w) \\
&\qquad \,\big\|\,
P_\theta
    (A_{T-1:0}\mid a_T,c,w)
\Big).
\end{aligned}
\]
Marginalizing \(A_{T-1:0}\) to \(A_0\) is a measurable
transformation. Hence, by the data-processing inequality for
KL divergence,
\[
\begin{aligned}
&D_{\mathrm{KL}}\Big(
P^{\mathcal{M}_H}(A_0\mid a_T,c,w)
\,\big\|\,
P_\theta(A_0\mid a_T,c,w)
\Big)\\
&\hspace{35mm}
\leq d_\theta(a_T,w).
\end{aligned}
\]
Pinsker's inequality then gives
\[
\begin{aligned}
&\left\|
P^{\mathcal{M}_H}(A_0\mid a_T,c,w)
-
P_\theta(A_0\mid a_T,c,w)
\right\|_{\mathrm{TV}}\\
&\hspace{35mm}
\leq
\sqrt{\frac{d_\theta(a_T,w)}{2}}.
\end{aligned}
\]

For every \(a_0\in[0,1]\), the event
\(\{A_0\leq a_0\}\) is measurable. Because the difference in the
probabilities of any measurable event is bounded by the total
variation distance, we obtain
\[
\begin{aligned}
\sup_{a_0\in[0,1]}
\Big|
&P_\theta(a_0\mid a_T,c,\doop(w))
-
P^{\mathcal{M}_H}(a_0\mid a_T,c,\doop(w))
\Big|\\
&\leq
\sqrt{\frac{d_\theta(a_T,w)}{2}}.
\end{aligned}
\]
Taking the expectation over \((A_T,W)\) and applying Jensen's
inequality yields
\[
\begin{aligned}
&\mathbb{E}_{A_T,W}\Bigg[
\sup_{a_0\in[0,1]}\big|
 P_\theta(a_0\mid A_T,c,\doop(W)) \\
& \qquad {}-P^{\mathcal{M}_H}(a_0\mid A_T,c,\doop(W))
\big|\Bigg] \\
&\leq
\mathbb{E}_{A_T,W}
\left[
\sqrt{\frac{d_\theta(A_T,W)}{2}}
\right]\\
&\leq
\sqrt{
\frac{
\mathbb{E}_{A_T,W}[d_\theta(A_T,W)]
}{2}
}
=
\sqrt{\frac{\varepsilon_\theta}{2}}.
\end{aligned}
\]
Thus, as \(\varepsilon_\theta\) approaches zero, the causal query
computed by \(\widehat{\mathcal{M}}_H(\theta)\) converges to the
high-level causal query.
\end{proof}

\section{Implementation Details}
The high-level reverse-dynamics model was implemented as a causal Transformer that models the evolution of the gender attribute over 50 reverse-diffusion steps. Each input token contained the current gender score, diffusion timestep, and classifier-free guidance (CFG) value. These variables were embedded separately and combined with a learned positional embedding. The model consisted of five Transformer blocks with a hidden dimension of 512, eight attention heads, a feed-forward dimension of 1,024, GELU activations, and a dropout rate of 0.1. A causal sliding-window attention mask allowed each token to attend only to itself and a specified number of preceding tokens. The output layer predicted the mean and variance of a Gaussian distribution over the change in the normalized gender score between consecutive diffusion steps.

The data were divided into training and testing subsets using an 80/20 seed-based split, ensuring that trajectories generated from the same initial noise seed did not appear in both subsets. The gender scores and diffusion timesteps were normalized using statistics calculated only from the training data, whereas the CFG values were provided without normalization. The models were trained on CFG values \(w\in\{0,1,3,5,7,9\}\) with a batch size of 32. Training used the AdamW optimizer with a learning rate of \(2\times10^{-5}\), weight decay of \(10^{-5}\), Gaussian negative log-likelihood loss, and gradient clipping at 1.0. Separate models were trained for each attention-window size using the same data split, normalization statistics, initialization seed, and data-shuffling seed.

Testing was performed autoregressively on held-out seeds and unseen CFG values \(w\in\{2,4,6,8,10,12,15,20\}\). Each rollout was initialized with the low-level model’s gender score at the initial reverse-diffusion state. At each subsequent step, the Transformer used the previously generated high-level states to predict and sample the next residual, which was added to the current state to construct the complete trajectory. The generated final score was transformed back to a masculine probability and compared with the corresponding low-level Stable Diffusion result. Agreement between the two levels was evaluated using final-score errors, differences in the expected masculine score and fairness gap, and the Wasserstein-1 distance between their final-score distributions.

\section{Selection of Window Size}
\label{sec:supp-window-size}

The transformer window size \(L\) determines the number of preceding
abstract diffusion states available when predicting the next state. A
larger window can capture longer-range dependencies induced by the
lossy abstraction, but also increases the computational cost and
complexity of the model. We compare
\(L\in\{1,5,10,15,20,25,30,35,40,45,50\}\) at the reference CFG scale
\(w_{\mathrm{ref}}=8\). For each prompt and window size, fidelity is
measured by the Wasserstein-1 distance between the low- and high-level
final-attribute distributions. The curves and shaded regions in
Figure 4 show the mean distance and one
standard deviation, respectively.

The results do not exhibit a monotonic relationship between window
size and distributional fidelity. Nevertheless, a rough pattern is
visible: discrepancies fluctuate more across the smaller and
intermediate windows, whereas performance becomes comparatively stable
for \(L\geq25\). In particular, \(L=25\) achieves near-minimal
discrepancy for the lawyer prompt and remains within the stable range
observed for the librarian and scientist prompts. We therefore select
\(L=25\) as a cross-prompt compromise between distributional fidelity,
context length, and computational cost. This selection should be
interpreted as a practical model-selection choice.

\begin{figure*}[t]
    \centering
    \begin{minipage}[t]{0.32\textwidth}
        \centering
        \includegraphics[width=\linewidth]
        {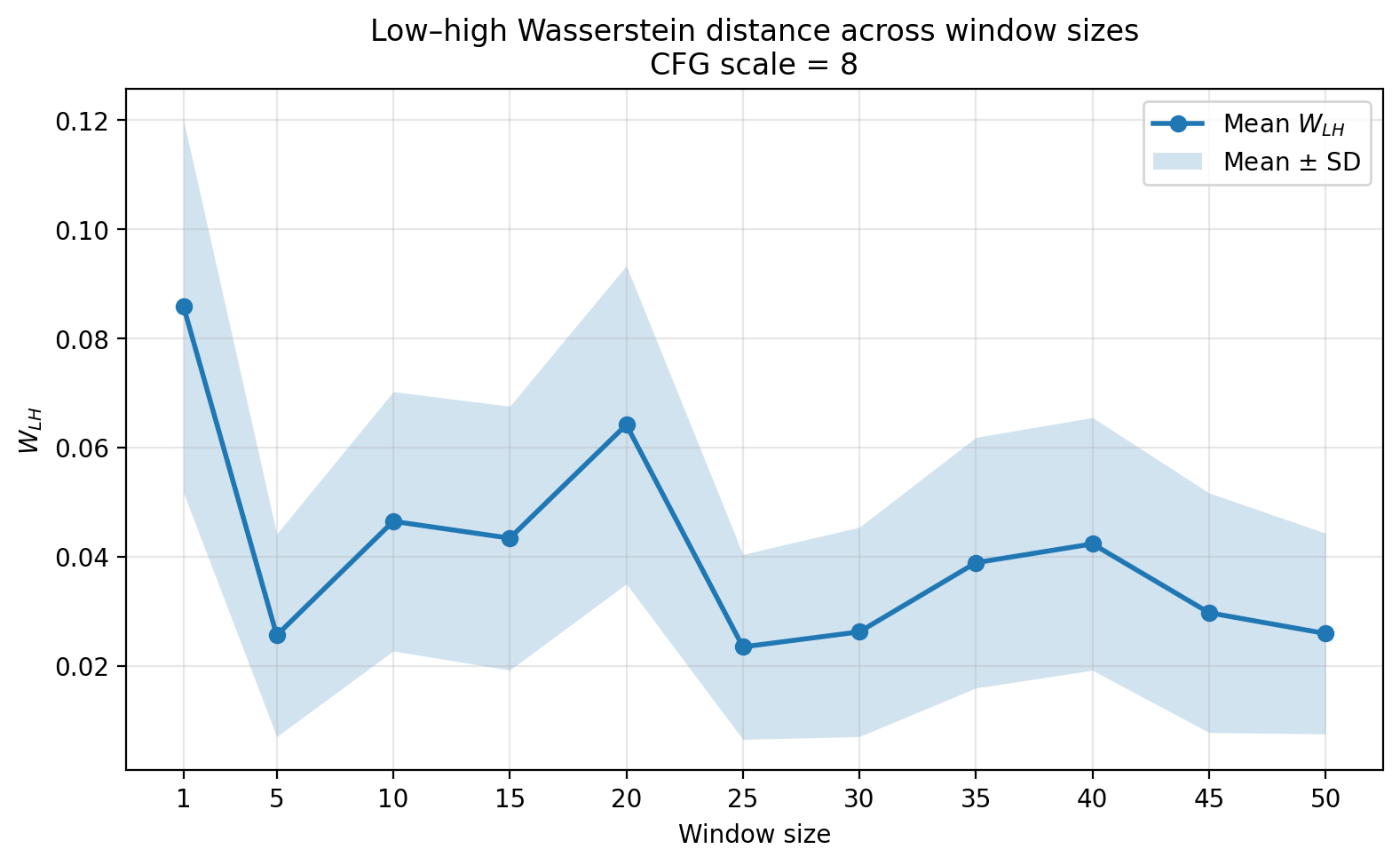}\\
        \textbf{(a) Lawyer}
    \end{minipage}
    \hfill
    \begin{minipage}[t]{0.32\textwidth}
        \centering
        \includegraphics[width=\linewidth]
        {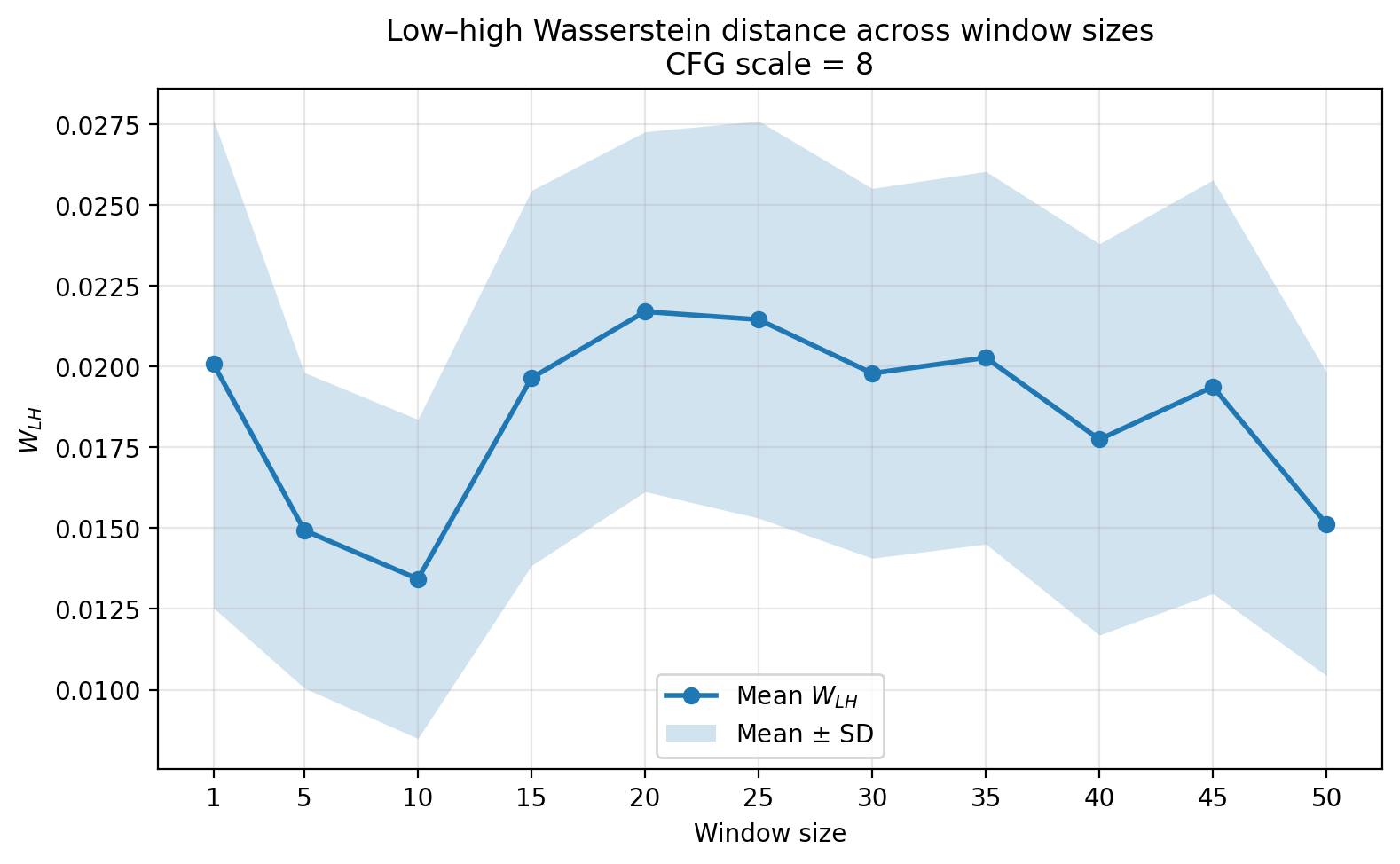}\\
        \textbf{(b) Librarian}
    \end{minipage}
    \hfill
    \begin{minipage}[t]{0.32\textwidth}
        \centering
        \includegraphics[width=\linewidth]
        {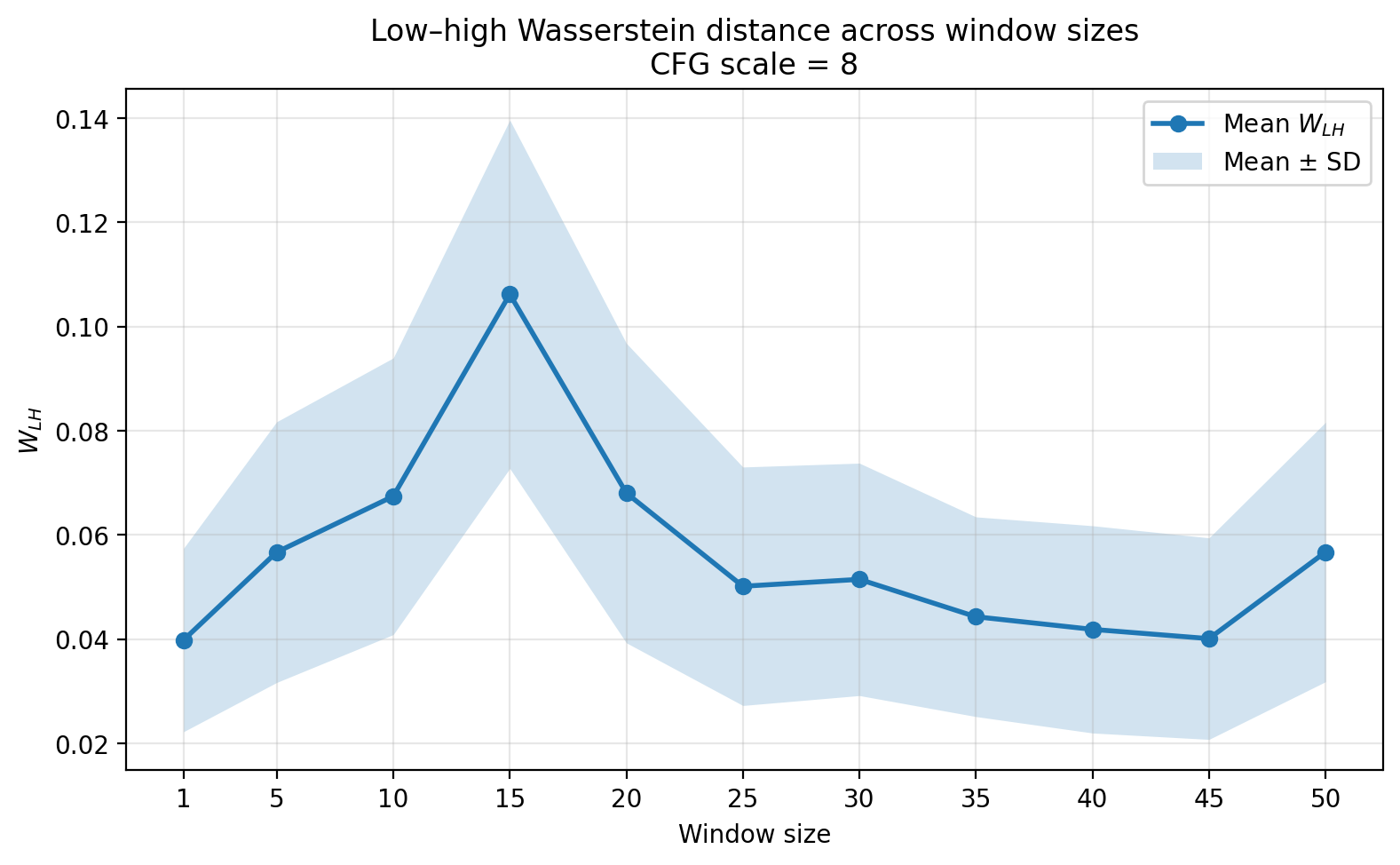}\\
        \textbf{(c) Scientist}
    \end{minipage}
    \caption{Low--high Wasserstein-1 distance across transformer
    window sizes at \(w_{\mathrm{ref}}=8\). The solid curves report
    the mean distance, and the shaded regions represent one standard
    deviation. Lower values indicate greater distributional fidelity.}
    \label{fig:supp-window-size}
\end{figure*}

\section{Empirical Assessment of AIC Violation}
\label{sec:supp-aic-violation}

The Abstract Invariance Condition (AIC) requires low-level states
mapped to the same high-level value to have equivalent causal effects
on downstream high-level variables. In our setting, an AIC violation
can occur when two diffusion states \(x_t^{(i)}\) and \(x_t^{(j)}\)
receive nearly identical attribute values under \(\tau_t\), but the
information discarded by this mapping leads to different subsequent
attribute trajectories.

Because the attribute scores are continuous, exact matches
\(a_t^{(i)}=a_t^{(j)}\) are unlikely in finite samples. We therefore
use approximate matching. At \(t=35\) and CFG scale \(w=8\), we form
pairs of held-out trajectories generated under the same audit
configuration and retain pairs satisfying
\[
    d_{35}^{(i,j)}
    =
    \left|
        a_{35}^{(i)}-a_{35}^{(j)}
    \right|
    <0.005.
\]
For each retained pair, we compare \(d_{35}^{(i,j)}\) with both the
one-step difference
\[
    d_{34}^{(i,j)}
    =
    \left|
        a_{34}^{(i)}-a_{34}^{(j)}
    \right|
\]
and the final-outcome difference
\[
    d_{0}^{(i,j)}
    =
    \left|
        a_{0}^{(i)}-a_{0}^{(j)}
    \right|.
\]

Figure 5 shows that pairs with nearly
identical values at \(t=35\) can diverge substantially at both the next
denoising step and the final output. The one-step differences remain
widely dispersed and can approach \(0.1\), while the final differences
span nearly the entire attribute range. Importantly, this dispersion
does not vanish as \(d_{35}^{(i,j)}\) approaches zero. Thus, proximity
in the abstract state \(A_{35}\) does not ensure proximity in either
the immediate or terminal downstream attribute.
These observations indicate that \(\tau_{35}\) merges low-level states
whose discarded information remains relevant to subsequent attribute
evolution, providing empirical evidence of an AIC violation. 

\begin{figure*}[t]
    \centering
    \begin{minipage}[t]{0.48\textwidth}
        \centering
        \includegraphics[width=\linewidth]
        {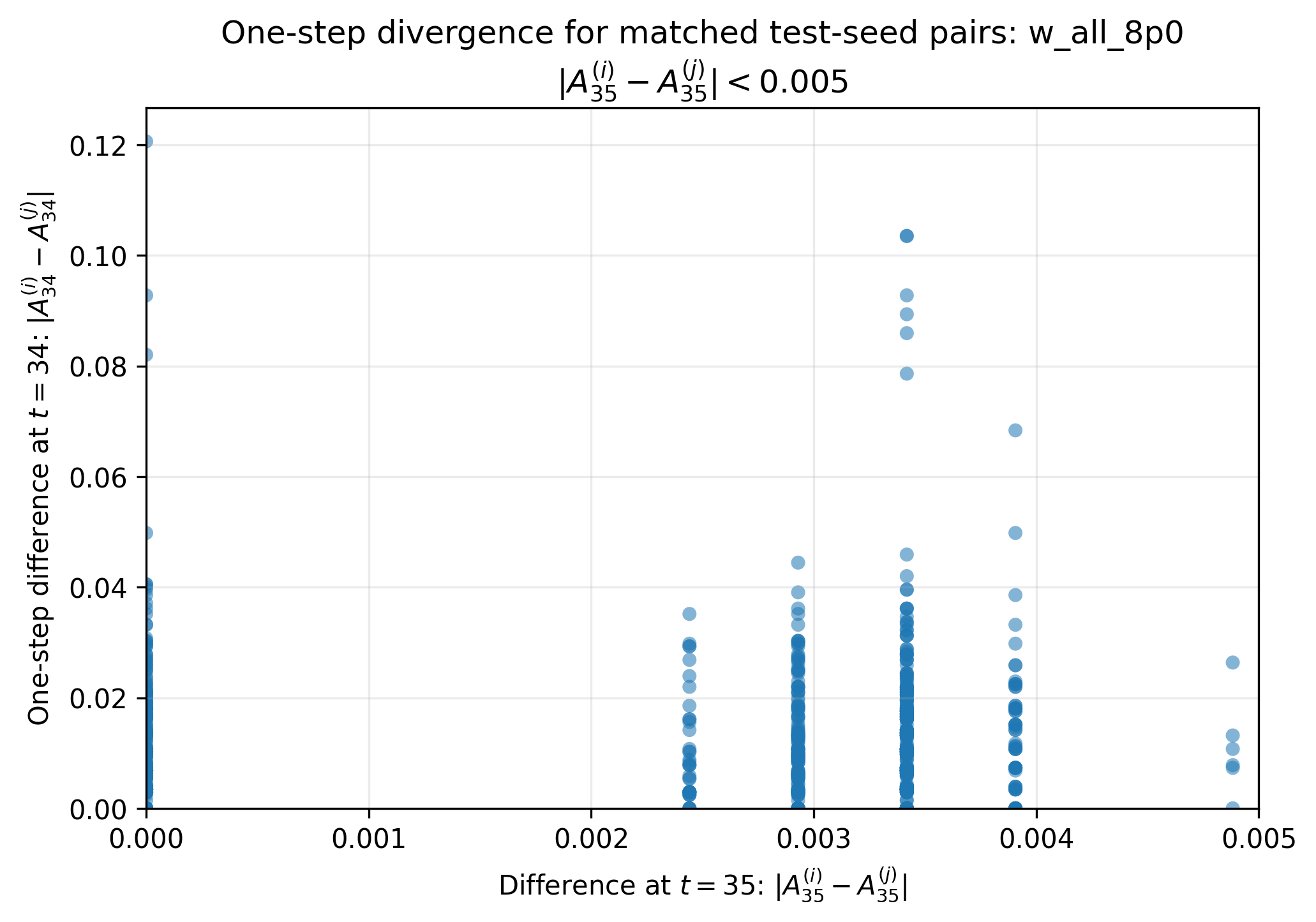}\\
        \textbf{(a) One-step divergence at \(t=34\)}
    \end{minipage}
    \hfill
    \begin{minipage}[t]{0.48\textwidth}
        \centering
        \includegraphics[width=\linewidth]
        {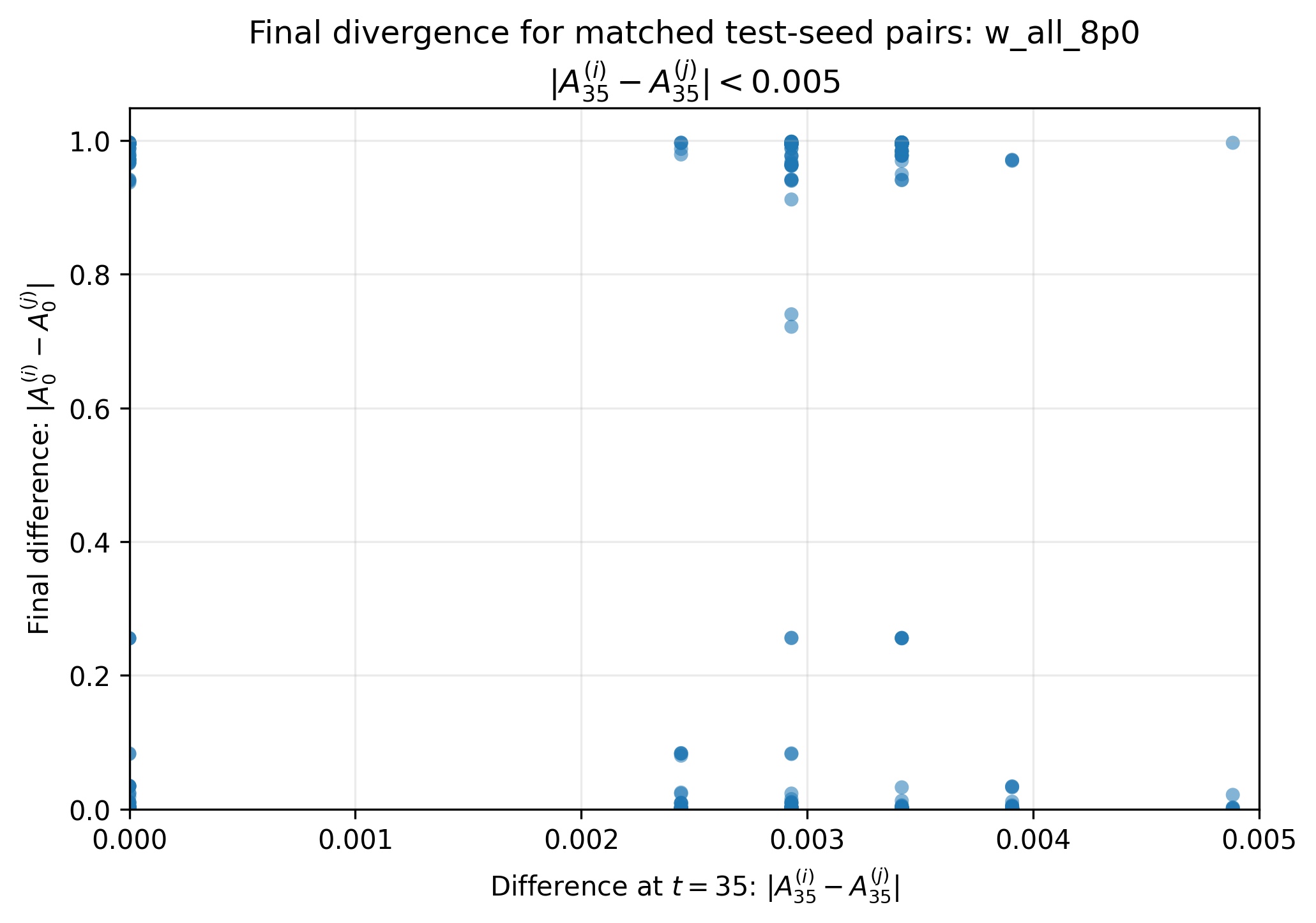}\\
        \textbf{(b) Final-outcome divergence at \(t=0\)}
    \end{minipage}
    \caption{Downstream divergence between matched held-out trajectory
    pairs at \(w=8\). Pairs are selected such that
    \(\lvert a_{35}^{(i)}-a_{35}^{(j)}\rvert<0.005\). Despite their
    nearly identical abstract values at \(t=35\), the pairs can exhibit
    substantial differences (a) at the next denoising step and (b) in
    the final attribute.}
    \label{fig:supp-aic-violation}
\end{figure*}

\section{Additional Experimental Results}
We provide distribution-level comparisons between the final
perceived-masculine scores generated by the low-level Stable Diffusion
v1.5 model and the high-level causal abstraction in Figures 6,7,8. These results
complement the expected-score comparisons in the main paper by showing
how well the abstraction preserves the complete output distribution.

Overall, the comparisons show that the high-level model generally
preserves the prompt-specific shapes and guidance-dependent changes of the low-level attribute distributions.

\begin{figure*}[t]
    \centering
    \includegraphics[width=0.95\textwidth]
     {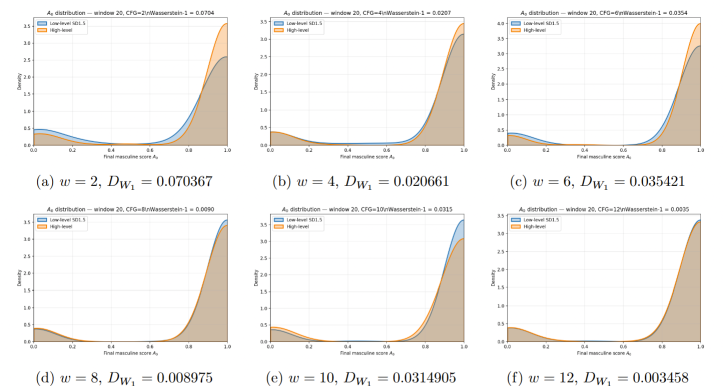}
    
    \caption{Final perceived-masculine score distributions produced by
    low-level SD1.5 and the high-level abstraction for the lawyer
    prompt across CFG scales. Each panel reports the corresponding
    Wasserstein-1 distance.}
    \label{fig:supp-lawyer-distributions}
\end{figure*}

\begin{figure*}[t]
    \centering
    \includegraphics[width=0.95\textwidth]
    {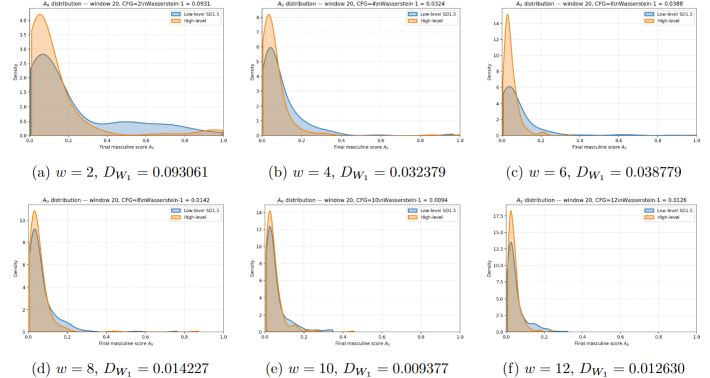}
    
    \caption{Final perceived-masculine score distributions produced by
    low-level SD1.5 and the high-level abstraction for the librarian
    prompt across CFG scales. Each panel reports the corresponding
    Wasserstein-1 distance.}
    \label{fig:supp-librarian-distributions}
\end{figure*}

\begin{figure*}[t]
    \centering
     \includegraphics[width=0.95\textwidth]
     {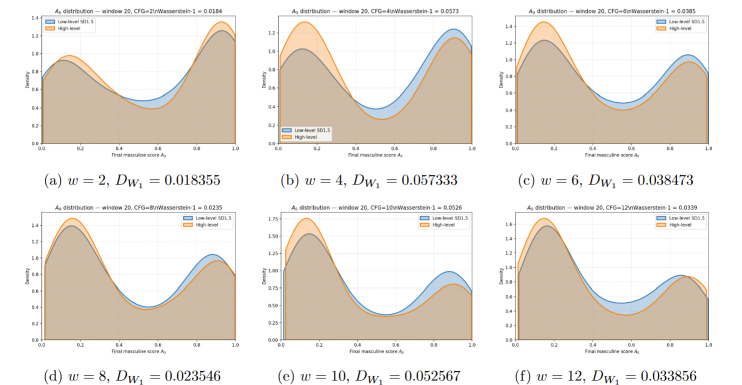}
    
    \caption{Final perceived-masculine score distributions produced by
    low-level SD1.5 and the high-level abstraction for the scientist
    prompt across CFG scales. Each panel reports the corresponding
    Wasserstein-1 distance.}
    \label{fig:supp-scientist-distributions}
\end{figure*}

\end{document}